\documentclass[lettersize,journal]{IEEEtran}
\usepackage{amsmath,amsfonts}
\usepackage{algorithmic}
\usepackage{algorithm}
\usepackage{array}
\ifCLASSOPTIONcompsoc
\usepackage[caption=false, font=normalsize, labelfont=sf, textfont=sf]{subfig}
\else
\usepackage[caption=false, font=footnotesize]{subfig}
\fi
\usepackage{textcomp}
\usepackage{stfloats}
\usepackage{url}
\usepackage{verbatim}
\usepackage{graphicx}
\usepackage{cite}
\usepackage{multirow}
\usepackage{colortbl}
\usepackage[table ]{ xcolor}
\usepackage{booktabs}
\usepackage{bm}
\hyphenation{op-tical net-works semi-conduc-tor IEE E-Xplore}

\begin{document}

\def \x {\bm{x}}
\def \y {\bm{y}}
\def \h {\bm{h}}
\def \w {\bm{w}}

\title{Meta Objective Guided Disambiguation for Partial Label Learning}

\author{Bo-Shi Zou, Ming-Kun Xie, and Sheng-Jun Huang
\thanks{Preprint. Under review.}}

\markboth{Journal of \LaTeX\ Class Files, August 2022}%
{Shell \MakeLowercase{\textit{et al.}}: A Sample Article Using IEEEtran.cls for IEEE Journals}



\maketitle

\begin{abstract}
Partial label learning (PLL) is a typical weakly supervised learning framework, where each training instance is associated with a candidate label set, among which only one label is valid. To solve PLL problems, typically methods try to perform disambiguation for candidate sets by either using prior knowledge, such as structure information of training data, or refining model outputs in a self-training manner. Unfortunately, these methods often fail to obtain a favorable performance due to the lack of prior information or unreliable predictions in the early stage of model training. In this paper, we propose a novel framework for partial label learning with meta objective guided disambiguation (MoGD), which aims to recover the ground-truth label from candidate labels set by solving a meta objective on a small validation set. Specifically, to alleviate the negative impact of false positive labels, we re-weight each candidate label based on the meta loss on the validation set. Then, the classifier is trained by minimizing the weighted cross entropy loss. The proposed method can be easily implemented by using various deep networks with the ordinary SGD optimizer. Theoretically, we prove the convergence property of meta objective and derive the estimation error bounds of the proposed method. Extensive experiments on various benchmark datasets and real-world PLL datasets demonstrate that the proposed method can achieve competent performance when compared with the state-of-the-art methods.

\end{abstract}

\begin{IEEEkeywords}
partial label learning, candidate label set, ground-truth label, disambiguation, meta-learning.
\end{IEEEkeywords}

\section{Introduction}
\label{sec: Introduction}
\IEEEPARstart{W}{ith} the increasing amounts of carefully labeled data used for training modern machine learning models,
model performance has been greatly improved. Nevertheless, collecting a large number of precisely labeled training data is time-consuming and costly in many realistic applications, which imposes a trade-off between the classification performance and the labeling cost.

Partial label learning (PLL) is a commonly used weakly supervised learning framework, where each training example is associated with a set of candidate labels, among which only one corresponds to the ground-truth label\cite{cour2011learning,jin2002learning}. For example, as illustrate in Fig. \ref{fig:msn}, in face image recognition (see Fig. \ref{fig:msn})\cite{guillaumin2008automatic,zeng2013learning,chen2014ambiguously}, an image with multiple faces is often associated with textual description. One can treat each face detected from the image as an example and those names extracted from the associated textual description consist of the candidate label set of each example. In crowdsourcing image tagging (see Fig. \ref{fig:husky}), an image would be assigned with different labels by labelers with different level of expertise. PLL has been successfully applied into numerous realistic applications, such as web mining \cite{feng2019partial,luo2010learning}, Ecoinformatics\cite{zhou2016partial}, multimedia content analysis\cite{chen2015matrix,dietterich1994solving}, etc.

\begin{figure}[!t]
	\centering
	\subfloat[Automatic face naming]
	{
		\includegraphics[width=0.21\textwidth]{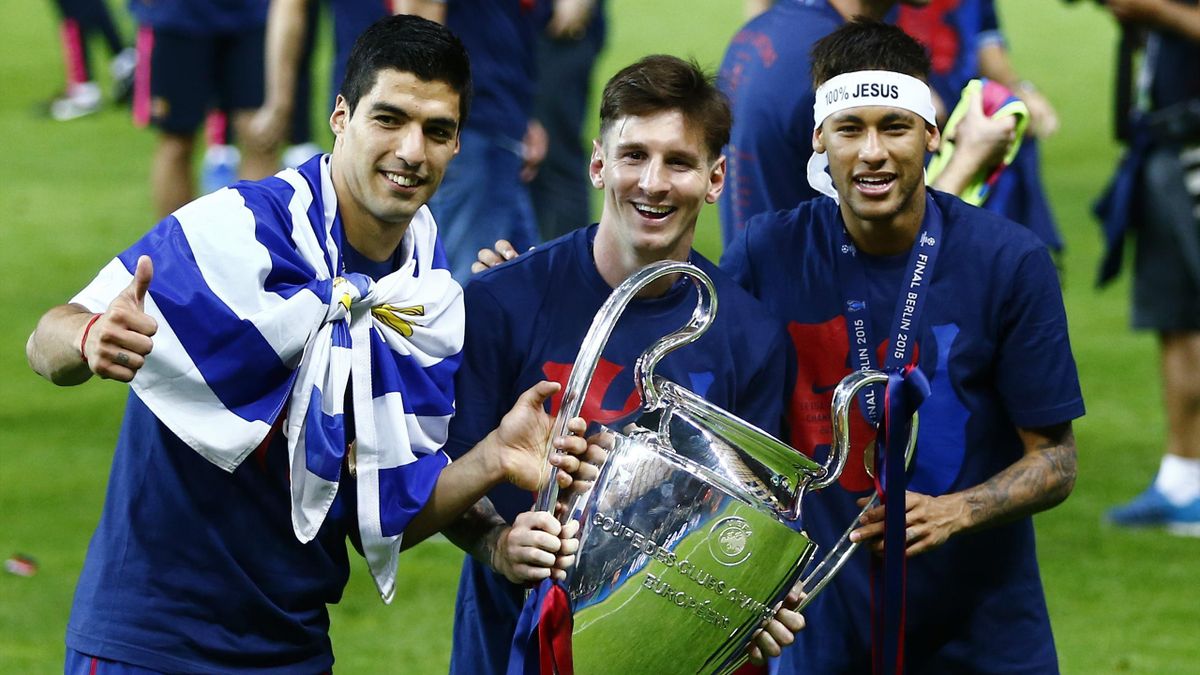}
		\label{fig:msn}
	}
	\hfil
	\subfloat[Crowdsourcing tagging]
	{
		\includegraphics[width=0.21\textwidth]{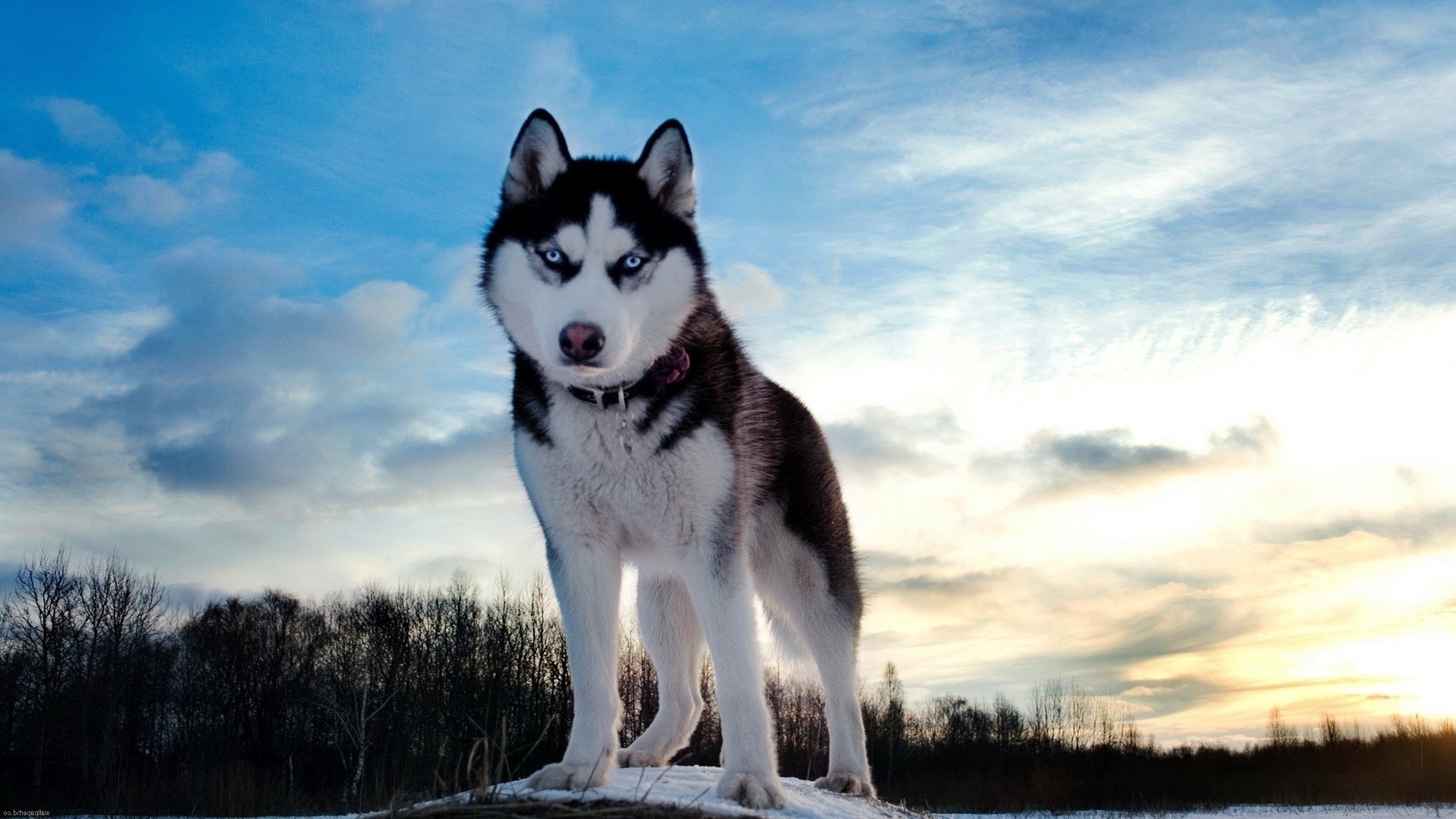}
		\label{fig:husky}
	}
	\caption{Examples for real-world applications of partial label learning. (a) Barcelona's \textcolor[RGB]{41, 50, 137}{Luis Suarez}, \textcolor[RGB]{41, 50, 137}{Lionel Messi} and \textcolor[RGB]{41, 50, 137}{Neymar} celebrate with the trophy after winning the UEFA Champions League Final. (b) A dog image with three candidate labels \{\textcolor[RGB]{41, 50, 137}{Husky, Malamute, Samoyed}\} , which are annotated by the web users, and \textcolor[RGB]{41, 50, 137}{Husky} is the ground-truth label.}
	\label{fig:intro}
\end{figure}

Partial label learning aims to train a multi-class classifier with partially labeled training examples, then utilize the classifier to automatically predict the ground-truth label for an unknown sample. The key challenge of PLL is that the learning algorithms cannot get access to the ground-truth label of training examples \cite{cour2011learning,wang2018towards,zhang2016partial}.
To mitigate this trouble, the most widely used strategy is disambiguation, i.e., identifying the ground-truth label from the candidate label set.

Most existing methods perform disambiguation for candidate labels based on prior knowledge.
Among them, some methods utilize the $\ell_1$ regularization term to capture the noisy labels based on the sparsity assumption\cite{lyu2020hera};
some methods assume that candidate labels are always instance-dependent (feature-dependent)\cite{xu2021instance};
some other works recover the ground-truth label matrix by maintaining the local consistency in the label space \cite{xu2019partial}.
The manifold consistency encourages the similar instances to have the same labels.\cite{xu2019partial,zhang2015solving}.
There exist some other methods exploiting the outputs of model for recovering the true label information.
These methods usually use the model outputs as a guidance to progressively identify the ground-truth labels. For instance, the method proposed in \cite{lv2020progressive} re-weights the losses by the confidences of each candidate label. Some methods encourage the model to output a shaper confidence distribution so as to easily identify the most probable class label\cite{nguyen2008classification,feng2019sure,yu2016maximum}.

Despite the improvement of performance for PLL that these methods have achieved, there still exist two main challenges for learning a multi-class classifier on PLL datasets. On one hand, existing methods rely on prior knowledge, which is often unavailable in practical applications. For instance, the candidate set may not be sparse but composed of many noisy labels in practice, since the training examples may be seriously corrupted in extreme cases. Besides, in high-dimensional feature space, the smoothing assumption may lose effectiveness since it relies on the Euclidean distance.
In these cases, the performance of the model often noticeably decreases due to the lack of prior information.
On the other hand, the disambiguation methods based on model outputs often suffer from the over-fitting issue of noise labels hidden in the candidate set, particularly when there are a large number of noisy labels. The phenomenon often occurs at the early stage, since the model fails to obtain a desirable disambiguation performance at this time due to its insufficient training.

In this paper, we propose a more reliable and effective approach to perform disambiguation for candidate labels of training examples without any additional auxiliary assumptions.
Specifically, we try to recover the true label information by designing a meta-objective on a very tiny validation set (a mini-batch), which can be collected in many practical tasks with small labeling cost. On one hand, we train a classification model by minimizing a confidence-weighted cross entropy loss..
On the other hand, the confidence of each candidate label is adaptively estimated in a meta fashion based on its meta-loss on a tiny validation set.
Theoretically, we show that the model learned by the proposed method is always better than directly learning from partial-labeled data, and we further provide the estimation error bounds for the proposed method. Comprehensive experimental results validate that demonstrate that the proposed method shows superiority to the comparing methods on multiple synthetic and real-world datasets.

The rest of this article is organized as follows. Firstly, we overview the related work in the following section. Then the proposed method will be introduced in Section \ref{sec:approach}. Section \ref{sec:theoretical} demonstrates theoretical properties of the proposed approach. Section \ref{sec:experiment} presents the experimental results, followed by the conclusion in Section \ref{sec:conclusion}.

\section{Related Work}
\label{sec:related work}
Partial label learning aims to train a classifier model merely using ambiguously labeling training data, thereby reducing labeling cost in practice.
The major difficulty of PLL is that the learning algorithms cannot directly get access to the ground-truth label of training examples since it is hidden in the candidate set \cite{cour2011learning,wang2018towards,zhang2016partial}.
To mitigate this issue, a large number of methods have been developed to handle partial labels by adopting the disambiguation strategy, which aims to identify the true label in the candidate set.
Existing PLL methods can be roughly divided into three groups: average-based methods, identification-based methods, and disambiguation-free methods.

Averaging-based methods commonly treat all the candidate labels equally and distinguish the ground-truth label by averaging their modeling outputs.
Cour \textit{et al}.\cite{cour2011learning} proposed to decompose the partial label learning task into a series of binary classification and
then employ SVM to solve them, which is a convex optimization approach named CLPL.
H{\"u}llermeier and Beringer \cite{hullermeier2006learning} proposed a k-nearest neighbors method named PL$k$NN, which distinguishes the ground-truth label by voting among the candidate labels of each neighboring sample.
Zhang and Yu \cite{zhang2015solving} proposed an instance-based method called IPAL, which averages the information of the candidate label from the neighboring instances to predict an unknown example.
Tang and Zhang \cite{tang2017confidence} proposed to enhance the disambiguation ability by simultaneously utilizing the confidences of candidate label and the weights of training samples.
Overall, despite this kind of method can be easily implemented, it has the common drawback that the outputs of false positive labels may mislead the model during the iterative training process and decrease the robustness of learning models.

Identification-based methods progressively estimate the confidence of each candidate label, and then try to recover the ground-truth label from the candidate label set.
Specifically, most existing PLL methods typically iteratively refine the confidence of the ground-truth by regarding it as a latent variable.
These methods update model parameters by using the iteratively refined expectation maximization techniques, such as maximum likelihood \cite{grandvalet2004learning,jin2002learning,liu2012conditional}, and maximum margin\cite{nguyen2008classification,yu2016maximum,chai2019large}.
For instance, Nguyen and Caruana \cite{nguyen2008classification} try to maximize the margin between the output of non-candidate labels and the maximal output of candidate label.
Feng and An \cite{feng2019sure} proposed SURE, which imposes the maximum infinity norm regularization on the modeling outputs to conduct pseudo-labeling and train models simultaneously, then automatically identify the ground-truth label with high confidence.
The above methods are usually restricted to linear models, which suffers from the lack of strong learning ability.
To exploit the powerful fitting ability of the deep neural network, Yao \textit{et al}. \cite{yao2020deep} conducted a preliminary study on deep PLL. They proposed a PLL method named $\text{D}^2$CNN \cite{yao2020deep}, they utilize the entropy regularizer to make the prediction more discriminative and employ convolution neural networks (DCNNs) to improve the ability of feature representation.
In Yao \textit{et al}. \cite{yao2020network}, they proposed to train two networks collaboratively by utilizing a network-cooperation mechanism.
Nevertheless, these methods mainly study the discrepancy between the false positive candidate labels and the ground-truth label but neglect the possibility of each candidate label can be the ground-truth label. To mitigate this issue, some methods try to estimate the confidence of each candidate label rather than directly identifying the ground-truth. For example, Lv \textit{et al}. \cite{lv2020progressive} proposed a progressive identification method named PRODEN, which identifies the true labels according to the output of classifier itself.

Different from aforementioned two disambiguation-based methods, disambiguation-free based methods directly learn from partially labeled data by making modifications to existing techniques. For instance, Zhang \textit{et al}. \cite{zhang2017disambiguation} proposed a disambiguation-free PLL method called ECOC, which decomposes the PLL problem into many binary classification problems and uses Error-Correcting Output Codes (ECOC) coding matrix \cite{dietterich1994solving} to fit partially labeled data.

The proposed method performs disambiguation for candidate labels by designing a meta objective to guide the model training. The idea of using a meta objective to re-weight training examples has been widely applied to train a robust model in many real-world scenarios \cite{ren2018meta,ren2018learning,xie2021partial}.
Similar to MAML\cite{finn2017model}, our method takes one gradient descent step on the meta-objective at each iteration. Besides, unlike these meta learning methods, our meta objective guided disambiguation method does not need any extra hyperparameters.

\section{The Proposed Approach}
\label{sec:approach}
In this paper, we consider the problem of ordinary $c$-class classification.
Let $\mathcal{X}=\mathbb{R}^{d}$ be the feature space with $d$-dimensional, $\mathcal{Y}=\{y_1, y_2, \dots, y_c\}$ be the target space with $c$ class labels. Besides, let $\mathcal{D} = \{(\boldsymbol{x}_i, S_i) \mid 1 \leq i \leq N\}$ be the training set with $N$ partial-labeled instances, where each instance $\boldsymbol{x}_i \in \mathcal{X}$ is represented by a $d$-dimensional feature vector $[\boldsymbol{x}_{i1}, \boldsymbol{x}_{i2}, \dots, \boldsymbol{x}_{id}]^\top$, $S_i \subseteq \mathcal{Y}$ is its corresponding candidate label set. Let $\boldsymbol{\phi}(\boldsymbol{x},\theta)$ be the DNN model, where $\theta$ denotes the parameters.

In the following content, we describe our proposed \textbf{M}eta \textbf{o}bjective \textbf{G}uided \textbf{D}isambiguation (MoGD) framework for solving PLL problems in detail. Firstly, we develop a confidence-weighted objective function to handle partial-labeled examples; then, we propose to adaptively estimate the confidence of each candidate label in a meta-learning fashion. To reduce the computational cost, we alternatively update the confidences of candidate labels and the parameters of the neural network by using an online approximation strategy.

\subsection{Confidence-Weighted Objective Function}

To solve PLL problems, the pioneer work \cite{cour2011learning} derives the partial cross entropy (PCE) loss, which computes the cross entropy (CE) loss for each candidate label. Formally, let $\hat{\y}_i = \phi(\x_i, \theta)$ denote the model output for instance $\x_i$, then we define the PCE loss as:
\begin{equation}
\label{eq:pce}
\mathcal{L}(\mathcal{D}, \theta) = \frac{1}{N} \sum_{i=1}^{N} \ell_{\text{CE}}(\hat{\y}_i, \boldsymbol{y}_i) = - \frac{1}{N} \sum_{i=1}^{N} \sum_{j=1}^{c} y_{ij} \log\hat{y}_{ij},
\end{equation}
where $\y_i=[y_{i1},...,y_{ic}]^\top$ denotes the labels vector, $y_{ij}=1$ if $j\in S_i$; $y_{ij}=0$, otherwise. For notational simplicity, we denote $\boldsymbol{h}_i(\theta)=[h_{i1}(\theta),...,h_{ic}(\theta)]$ as the loss vector computed on $\boldsymbol{x}_i$. Accordingly, the PCE loss can be rewritten by $\mathcal{L}(\mathcal{D}, \theta) = \frac{1}{N} \sum_{i=1}^{N}\sum_{j=1}^c h_{ij}(\theta)$.

Unfortunately, PCE loss regards all candidate labels as ground-truth labels and thus introduces noisy labels into model training. These noisy labels often mislead the model, which leads to a noticeable decrease in generalization performance. To mitigate the negative influence of noise labels, a label confidence vector ${\boldsymbol{w}_i} = [w_{i1}, w_{i2}, \dots, w_{ic}]$ is assigned to every example $\boldsymbol{x}_i$, where the confidence $w_{ij}\in[0,1]$ is use to estimate the probability that the
example $\boldsymbol{x}_i$ belongs to $j$-th class.
Obviously, if $y_{ij}$ is not a candidate label, then we keep its confidence $w_{ij}$ being 0 due to the fact that it cannot be a ground-truth label for example $\boldsymbol{x}_i$.

The confidence-weighted cross entropy loss can be formulated as follows:
\begin{equation}\label{eq:train}
	\mathcal{L}^{tr}(\mathcal{D}, \theta) = \frac{1}{N} \sum_{i=1}^{N} \boldsymbol{w}_{i} \boldsymbol{h}_i(\theta) = \frac{1}{N}\sum_{i=1}^{N}\sum_{j=1}^{c}w_{ij}h_{ij}(\theta),
\end{equation}
In the ideal case, if we can accurately recover the confidences for ground-truth labels, then $\mathcal{L}^{tr}$ would be degenerated into the standard cross entropy, which yields that the disambiguation is achieved for candidate labels.

Nevertheless, it is hard to estimate the confidence precisely without any prior knowledge in many real-world scenarios. To mitigate this issue, we follow a meta-learning paradigm to design a disambiguation strategy based on a meta objective.

\subsection{Meta Objective Guided Disambiguation}

In this subsection, we aim to estimate a optimal confidence distribution for every example in an meta-learning fashion. Towards this goal, we firstly measure the performance of confidence estimation based on a validation set, and then for each candidate label, we use the measurement as a guidance to adaptively estimate its confidence. The only need is a tiny clean validation set (e.g., a mini-batch of examples), which can be collected in many realistic tasks with small labeling cost.

Specifically, we denote $\mathcal{D}_{val} = \{(\boldsymbol{x}_i^v,\y_i^v) \mid 1 \le i \le M \}$ as a validation set with $M\ (M \ll N)$ examples available during the training phase. Here, for every validation example $(\x^v_i,\y^v_i)$, we use superscript $v$ to distinguish it from training examples. It is noteworthy that every validation example $\x^v_i$ has been annotated with its ground-truth label vector $\y^v_i$. Intuitively, \textit{the optimal confidences often leads to the best model such that minimizes the validation loss}. Based on this intuition, we can treat the objective function on the validation set as guidance to optimize the confidences. The meta objective function can be formulated as follows:
\begin{equation}
	\label{eq:val_loss}
	\mathcal{L}^{meta}(\mathcal{D}_{val}, \theta^*(\boldsymbol{w})) = \frac{1}{M} \sum_{i=1}^{M}\ell_\text{CE}(\hat{\y}_i^v, \y^v_i),
\end{equation}
where $\mathcal{L}^{meta}$ represents the loss of trained classifier $\theta^*(\w)$ on the validation set $\mathcal{D}_{val}$.
In Section \ref{sec:experiment}, we discuss that the proposed method is influenced by the size of validation set, and the results show that the performance of MoGD is insensitive to the size of validation set.

\begin{algorithm}[!t]
	\caption{Meta Objective Guided Disambiguation for Partial Label Learning.}
	\label{alg:pllmd}
	\begin{algorithmic}
		\STATE
		\STATE {\textsc{INPUT:}}
		\STATE \hspace{0.5cm} The training set $\mathcal{D} = \{(\boldsymbol{x}_i, S_i )\mid 1 \le i \le N\}$;
		\STATE \hspace{0.5cm} The validation set $\mathcal{D}_{val} = \{(\boldsymbol{x}_i^v, S_i^v) \mid 1 \le i \le M\}$;
		\STATE \hspace{0.5cm} The max iteration $T$;
		\STATE {\textsc{PROCEDURE:}}
		\STATE \hspace{0.5cm} Initialize the net parameters $\theta$ and confidences $\boldsymbol{w}$;
		\STATE \hspace{0.5cm} \textbf{for} $t=1$ \textbf{to} $T$ \textbf{do}:
		\STATE \hspace{1cm} Sample a mini-batch examples $\mathcal{B} = \{ (\x_i, S_i)\}_{i=1}^b$
		\STATE \hspace{1cm} from training set $\mathcal{D}$;
		\STATE \hspace{1.cm} Update $\hat{\theta}^{(t)}$ according to Eq.\eqref{eq:theta_hat};
		\STATE \hspace{1.cm} Compute $\mathcal{L}^{meta}$ according to Eq.\eqref{eq:val_loss};
		\STATE \hspace{1.cm} Update $\w^{(t)}$ according to Eq.\eqref{eq:w_hat}, Eq.\eqref{eq:w_clip}, Eq.\eqref{eq:w_normalize};
		\STATE \hspace{1.cm} Update $\theta^{(t+1)}$ according to Eq.\eqref{eq:theta_t1};
		\STATE \hspace{0.5cm} \textbf{end for};
		\STATE {\textsc{OUTPUT:}}
		\STATE \hspace{0.5cm} The classifier model $\boldsymbol{\phi}$;
	\end{algorithmic}
\end{algorithm}

We can adopt the alternative strategy to optimize variables $\theta$ and $\w$ by updating one of the variables while fixing the other. The process continues until both of these two variables converge or exceed the maximum iteration. Unfortunately, it requires two nested loops to obtain optimal parameters, which is unbearable in many realistic tasks, especially when the deep model is used. To improve the training efficiency, inspired by the previous works \cite{ren2018learning}, the online approximation strategy is used to obtain the optimal variables $\theta^*$ and $\w^*$.  Specifically, to obtain the $\w^*$, motivated by the EM algorithm, we can initialize the confidences uniformly as follows:
\begin{equation}
	w_{ij} = \begin{cases}
		\frac{1}{|S_i|}& j \in S_i\\
		0 ,& \text{otherwise}
	\end{cases}
\end{equation}
Then, at every training iteration $t$, given a mini-batch training samples $\mathcal{B} = \{ (\boldsymbol{x}_i, S_i)\}_{i=1}^b$, where $b$ denotes the mini-batch size, that satisfies $b\ll N$. Then we can update the parameters $\theta$ based on the descent direction of the objective function on this mini-batch:
\begin{equation}
\label{eq:theta_hat}
\hat{\theta}^{(t)}(\boldsymbol{w}) = \theta^{(t)} - \left.\alpha \nabla_\theta \mathcal{L}_{tr}(\mathcal{D}_b, \theta)\right|_{\theta = \theta^{(t)}},
\end{equation}
where $\alpha$ is the step size with respect to $\theta$.

Next we can seek the optimal confidence $\boldsymbol{w}^*$ by minimizing the validation loss $\mathcal{L}^{meta}$.

Based on the $\hat{\theta}^{(t)}$, we can take a single gradient descent step according to the direction of the meta-objective on the mini-batch validation samples to get a cheap estimate of $\boldsymbol{\hat{w}}^{(t)}$, which can be formally written as:
\begin{equation}
\label{eq:w_hat}
\boldsymbol{\hat{w}}^{(t)} =  -\left.\beta \nabla\boldsymbol{_w} \mathcal{L}^{meta}(\mathcal{D}_{val}, \hat{\theta}^{(t)}(\boldsymbol{w})) \right|_{\boldsymbol{w}=0},
\end{equation}
where $\beta$ is the step size with respect to $\boldsymbol{w}$. To make sure the the expect loss non-negative, it requires us to guarantee each $w_{ij} \in \boldsymbol{w}$ is non-negative:
\begin{equation}
\label{eq:w_clip}
\hat{w}_{ij}^{(t)} = \max(\hat{w}_{ij}^{(t)}, 0), \forall\ i \in [1, m] \text{ and } j \in [1, c].
\end{equation}

In practice, for each training example, we normalize the confidences with respect to every class so that they sum to one:
\begin{equation}
\label{eq:w_normalize}
w_{ij}^{(t)} = \begin{cases}
\exp(\hat{w}_{ij}^{(t)}) / \sum_{j\in S_i} \exp(\hat{w}_{ij}^{(t)}) ,& j \in S_i\\
0 ,& \text{otherwise}
\end{cases}
\end{equation}
Finally, we can optimize the model parameter $\theta^{(t)}$ by employing the gradient descent based on the latest updated confidences $\boldsymbol{w}^{(t)}$, which is described below:
\begin{equation}
\label{eq:theta_t1}
\theta^{(t+1)} = \theta^{(t)} - \left.\alpha \nabla_\theta\mathcal{L}^{tr}(\mathcal{D}_b, \theta(\boldsymbol{w}^{(t)})) \right|_{\theta = \theta^{(t)}}.
\end{equation}
We repeat these procedures until both of them converge or exceed the maximum epoch.
The step-by-step pseudo-code of MoGD is shown in Algorithm \ref{alg:pllmd}.

\section{Theoretical Analysis}
\label{sec:theoretical}
\newenvironment{proof}{{\par\it Proof:}\quad}{\hfill $\square$\par}
\newtheorem{Definition}{Definition}
\newtheorem{Theorem}{Theorem}
\newtheorem{Lemma}{Lemma}
In this section, we first analyze the convergence of MoGD, then establish the estimation error bounds.
\subsection{Convergence}
Different from optimization on a single-level problem, the proposed methods is a optimization problem with two-level objectives including Eq.\eqref{eq:w_hat} and Eq.\eqref{eq:theta_t1}. Following the analysis procedure in \cite{ren2018learning}, we obtain the convergence results that both the training and meta objective function converges to the critical points under some mild conditions.

\begin{Definition}
	A function $f(x): \mathbb{R}^{d} \rightarrow \mathbb{R}$ is said to be Lipschitz-smooth with constant $L$ if \[\|\nabla f(x)-\nabla f(y)\| \leq L\|x-y\|, \forall x, y \in \mathbb{R}^{d}\]
\end{Definition}
\begin{Definition}
	$f(x)$ has $\sigma$-bounded gradients if $\Vert \nabla f(x)\Vert \le \sigma $ for all $x\in \mathbb{R}^d$.
\end{Definition}
\begin{Theorem}[convergence]
	Suppose that the meta loss function is Lipschitz-smooth with constant $L_\ell$, and for training data $\boldsymbol{x}_i$, the training loss function $\ell_i$ have $\sigma$-bounded gradients. Then the validation loss always monotonically decreases with the iteration $t$ by employing our optimization method, i.e.,
	\begin{equation*}
		\mathcal{L}^{meta}(\theta^{(t+1)}) \le \mathcal{L}^{meta}(\theta^{(t)}).
	\end{equation*}
	Note that, the step size $\alpha^{(t)}$ satisfies $\alpha^{(t)} \le \frac{2nc}{L_\ell\sigma^2}$, where $n$ denotes the batch size of training, and $c$ denotes the number of classes.

	Furthermore, if the gradient of validation loss becomes $0$ at iteration $t$, the equality will hold, i.e.,
	\[\mathcal{L}^{meta}(\theta^{(t+1)}) = \mathcal{L}^{meta}(\theta^{(t)})\]
	if and only if
	\[\nabla\mathcal{L}^{meta}(\theta^{(t)}) = 0.\]
\end{Theorem}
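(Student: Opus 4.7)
The plan is to follow the standard descent lemma argument used in Ren et al.\ for learned example reweighting, instantiated to the MoGD updates. The starting point is the Lipschitz-smoothness assumption applied along the single gradient step $\theta^{(t)} \to \theta^{(t+1)}$, which gives
\[
\mathcal{L}^{meta}(\theta^{(t+1)}) \le \mathcal{L}^{meta}(\theta^{(t)}) + \bigl\langle \nabla\mathcal{L}^{meta}(\theta^{(t)}),\, \theta^{(t+1)}-\theta^{(t)} \bigr\rangle + \tfrac{L_\ell}{2}\,\|\theta^{(t+1)}-\theta^{(t)}\|^2 .
\]
I will substitute $\theta^{(t+1)}-\theta^{(t)} = -\alpha^{(t)} \nabla_\theta \mathcal{L}^{tr}(\mathcal{D}_b,\theta^{(t)};\w^{(t)})$ and, using linearity of $\mathcal{L}^{tr}$ in $\w$, expand into $-\tfrac{\alpha^{(t)}}{n}\sum_{i,j} w_{ij}^{(t)} \nabla_\theta h_{ij}(\theta^{(t)})$.

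The crux is to analyze the inner-product term. From Eq.\eqref{eq:w_hat}, because $\hat\theta^{(t)}(\w) = \theta^{(t)} - \tfrac{\alpha^{(t)}}{n}\sum_{i,j} w_{ij}\nabla_\theta h_{ij}(\theta^{(t)})$ is affine in $\w$, the chain rule gives $\hat w_{ij}^{(t)} = \tfrac{\alpha^{(t)}\beta}{n}\,\bigl\langle \nabla\mathcal{L}^{meta}(\theta^{(t)}), \nabla_\theta h_{ij}(\theta^{(t)})\bigr\rangle$ before clipping. Hence, denoting by $g_{ij} := \langle\nabla\mathcal{L}^{meta}(\theta^{(t)}), \nabla_\theta h_{ij}(\theta^{(t)})\rangle$, the post-clip weights in Eq.\eqref{eq:w_clip} are proportional to $\max(g_{ij},0)$, and the softmax normalization in Eq.\eqref{eq:w_normalize} preserves non-negativity and monotonicity within each $S_i$. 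Therefore $\langle\nabla\mathcal{L}^{meta}(\theta^{(t)}), \nabla_\theta \mathcal{L}^{tr}\rangle = \tfrac{1}{n}\sum_{i,j} w_{ij}^{(t)} g_{ij}$, which is a weighted sum of non-negative terms and is strictly positive unless all relevant $g_{ij}$ vanish, i.e.\ unless $\nabla\mathcal{L}^{meta}(\theta^{(t)})=0$ (since then $g_{ij}\le 0$ for all surviving indices, forcing $w_{ij}^{(t)}=0$). This handles the equality clause of the theorem simultaneously.

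For the quadratic term, I will use the $\sigma$-bounded gradient assumption together with $\sum_{j\in S_i} w_{ij}^{(t)}=1$ to bound $\|\nabla_\theta \mathcal{L}^{tr}\|^2$. A coarse bound via Jensen/triangle inequality gives $\|\nabla_\theta \mathcal{L}^{tr}\|^2 \le \sigma^2$, but, following the granularity used to obtain the stated step-size constant $\tfrac{2nc}{L_\ell\sigma^2}$, the right counting is to bound $\|\sum_{i,j} w_{ij}^{(t)}\nabla_\theta h_{ij}\|^2$ via a per-class Cauchy--Schwarz that brings in a factor of $nc$ in the denominator. Combining the negative first-order term with the positive quadratic term yields a bound of the form
\[
\mathcal{L}^{meta}(\theta^{(t+1)}) - \mathcal{L}^{meta}(\theta^{(t)}) \le -\alpha^{(t)}\Bigl(1 - \tfrac{\alpha^{(t)} L_\ell \sigma^2}{2nc}\Bigr) \cdot A^{(t)},
\]
where $A^{(t)} \ge 0$ is a non-negative quantity that vanishes iff $\nabla\mathcal{L}^{meta}(\theta^{(t)})=0$. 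Imposing $\alpha^{(t)}\le \tfrac{2nc}{L_\ell\sigma^2}$ makes the parenthesis non-negative and gives monotone decrease, with equality exactly at critical points.

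The main obstacle I anticipate is the bookkeeping introduced by the two post-processing operations on $\w^{(t)}$: the clip in Eq.\eqref{eq:w_clip} and the softmax-style renormalization in Eq.\eqref{eq:w_normalize}. These destroy the clean linear relationship between $\w^{(t)}$ and the meta-gradient, so one has to argue carefully that (i) the clip only removes indices with $g_{ij}\le 0$, so no anti-aligned direction is injected, and (ii) the renormalization, being a positive monotone transformation of the non-negative $\hat w_{ij}$ within each candidate set $S_i$, keeps the correlation $\sum_{i,j} w_{ij}^{(t)} g_{ij}$ non-negative and preserves the characterization of equality as $\nabla\mathcal{L}^{meta}(\theta^{(t)})=0$. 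Once this reduction is in place, the remaining quadratic bound and the choice of $\alpha^{(t)}$ are routine.
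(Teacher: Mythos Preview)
The paper does not actually supply a proof of this theorem; it only states that the result follows ``following the analysis procedure in \cite{ren2018learning}'' and then moves on. Your proposal correctly identifies that analysis procedure (descent lemma plus the observation that the learned weights are, by construction, aligned with the meta-gradient), so at the level of overall strategy you are doing exactly what the paper intends.

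However, there is a genuine gap in your argument at the step you yourself flag as the main obstacle, and it does not close the way you suggest. In Ren et al.\ the post-processing is a \emph{linear} normalization $w_i=\tilde w_i/\sum_k\tilde w_k$, which sends $\tilde w_i=0$ to $w_i=0$; hence any index with $g_{ij}\le 0$ is removed by the clip and stays removed after normalization, so $\sum_{i,j}w_{ij}^{(t)}g_{ij}\ge 0$ holds trivially. In MoGD, Eq.~\eqref{eq:w_normalize} is a \emph{softmax} over $S_i$, and $\exp(0)=1\neq 0$: an index $j\in S_i$ with $g_{ij}\le 0$ is clipped to $\hat w_{ij}^{(t)}=0$ but then receives strictly positive weight $w_{ij}^{(t)}>0$ after normalization. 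Your claim that the renormalization ``keeps the correlation $\sum_{i,j}w_{ij}^{(t)}g_{ij}$ non-negative'' therefore fails. A simple two-label candidate set with $g_{i1}$ small and positive and $g_{i2}$ large and negative already gives $\sum_j w_{ij}^{(t)}g_{ij}<0$, so the first-order term in the descent lemma can have the wrong sign and neither the monotone-decrease conclusion nor the ``equality iff $\nabla\mathcal{L}^{meta}(\theta^{(t)})=0$'' characterization follows from your bound. To rescue the argument you would need either to replace the softmax by a normalization that annihilates zero (as in Ren et al.), or to produce an additional quantitative estimate showing that the softmax-weighted correlation is still non-negative under the stated assumptions; the monotonicity observation you invoke is not sufficient for the latter.
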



\subsection{Comparisons with Ordinary Methods}
In the following content, we discuss the effectiveness of the proposed approach compared with ordinary PLL methods.

First, define the empirical risk as:
\begin{equation*}
	\hat{R}(\theta) = \frac{1}{N} \sum_{i=1}^{N} \ell (\phi(\boldsymbol{x}_i, \theta), \boldsymbol{y}_i),
\end{equation*}
then we have the theorem below.
\begin{Theorem}[effectiveness]
	Define $\theta'$ is the model trained without using validation set, i.e.,
	\begin{equation*}
		\theta' = \arg \min_{\theta \in \Theta} \sum_{i=1}^N \ell(\boldsymbol{\phi}(\boldsymbol{x}_i, \theta), \boldsymbol{y}_i).
	\end{equation*}
 	Then the empirical risk of $\hat{\theta}$ that produced by MoGD is never worse than $\theta'$ which trained merely utilizes original partial labeled examples, i.e., $\hat{R}(\hat{\theta}) \le \hat{R}(\theta')$.
\end{Theorem}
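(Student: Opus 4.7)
The plan is to recast both $\theta'$ and the MoGD output $\hat{\theta}$ as solutions of a single weighted-loss optimization, and then exploit the fact that MoGD minimizes over a strictly larger family of weight configurations than the baseline. The key observation is that the uniform choice $w_{ij}^{\mathrm{unif}} = 1/|S_i|$ for $j \in S_i$ (and zero otherwise) lives inside the feasible weight set $\mathcal{W}$ carved out by Eqs.~\eqref{eq:w_clip}--\eqref{eq:w_normalize}, and that under this choice $\mathcal{L}^{tr}(\w^{\mathrm{unif}}, \theta)$ coincides with the partial cross-entropy $\hat{R}(\theta)$ up to a per-example rescaling by $1/|S_i|$. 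In particular, $\theta' = \arg\min_\theta \mathcal{L}^{tr}(\w^{\mathrm{unif}}, \theta)$, which puts the baseline on the same footing as MoGD's joint optimization.

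With this setup, I would invoke the elementary variational inequality
\begin{equation*}
  \mathcal{L}^{tr}(\w^*, \hat{\theta}) \;=\; \min_{\w \in \mathcal{W}, \theta}\mathcal{L}^{tr}(\w, \theta)\;\le\;\min_\theta \mathcal{L}^{tr}(\w^{\mathrm{unif}}, \theta)\;=\;\mathcal{L}^{tr}(\w^{\mathrm{unif}}, \theta'),
\end{equation*}
which holds simply because $\w^{\mathrm{unif}} \in \mathcal{W}$. Reading the left-hand side as $\hat{R}(\hat{\theta})$ (the effective weighted risk realized by MoGD) and the right-hand side as $\hat{R}(\theta')$ (the baseline PCE at its own minimizer) delivers the stated inequality. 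Concretely, I would spell out that MoGD can always mimic the baseline by choosing $\w = \w^{\mathrm{unif}}$ and solving the resulting parameter subproblem to obtain $\theta'$; since MoGD instead selects $\w^*$ via the meta objective, any deviation from $\w^{\mathrm{unif}}$ is adopted only when it further decreases the training-plus-meta objective, so it cannot inflate the empirical risk.

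The principal obstacle will be reconciling this clean ``minimum over a larger set'' argument with MoGD's online one-step updates, which do not literally solve the joint minimization. My approach here is to lean on Theorem~1: its monotone-descent conclusion guarantees that the meta objective, and with it the effective empirical risk, is non-increasing along the iterates, so in particular never regresses above the baseline initialization. Because $\mathcal{L}^{tr}$ is linear in $\w$ on the per-example simplex, the first-order optimality condition reached at a stationary point of the coupled updates upgrades local optimality of $\w^*$ to the global inequality against any feasible $\w$, including $\w^{\mathrm{unif}}$. A secondary subtlety worth flagging at the outset of the proof is the identification of $\hat{R}(\hat{\theta})$ with the meta-weighted loss rather than the literal PCE; once this convention is fixed, the two-line variational argument above is the entire proof.
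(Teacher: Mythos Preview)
Your proposal follows essentially the same route as the paper's own proof: both argue that the baseline PCE objective is recovered by a particular feasible choice of confidence weights (the paper sets candidate weights to one, you use the normalized $1/|S_i|$, which yields the same $\theta$-minimizer), so minimizing over the larger MoGD weight family cannot produce a worse empirical risk. The paper packages this as a one-line contradiction (``we can always set the weight of each label to its initial value'') and omits the caveats you raise about the online updates and the identification of $\hat{R}(\hat{\theta})$ with the weighted loss, but the core argument is identical.
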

\begin{proof}
	First, we have
	\[\hat{\theta} = \arg \min_{\theta \in \Theta} \sum_{i=1}^{N} \sum_{j=1}^{c}w_{ij} h_{ij}(\theta))\]
	Suppose $\hat{R}(\hat{\theta}) > \hat{R}(\theta')$, obviously we can always set the weight of each label to its initial value, e.g., non-candidate label to zero and candidate label to one. Then we can obtain the $\hat{R}(\hat{\theta}) = \hat{R}(\theta')$.
	Therefore, $\hat{R}(\hat{\theta})$ is never worse than $\hat{R}(\theta')$.
\end{proof}
\begin{table*}[htbp]
	\caption{Characteristics of Three Image Datasets and Models.}
	\label{tab:char cifar}
	\centering
	\begin{tabular}{c|c c c c|c}
		\toprule
		Dataset & \# Train & \# Test & \# Feature& \# Class & Model $\phi(x,\theta)$\\
		\midrule
		Fashion-MNIST & 60000 & 10000 & 784 &10 & Linear Model\\
		CIFAR-10 & 50000 & 10000 & 3072 & 10 &ResNet, ConvNet \\
		CIFAR-100 & 50000 & 10000 & 3072 & 100 &  ResNet, ConvNet\\
		\bottomrule
	\end{tabular}
\end{table*}
\begin{table*}[!tb]
	\caption{Characteristics of Five Real-World Datasets and Models.}
	\label{tab:char real world}
	\centering
	\begin{tabular}{c|c c c c c|c}
		\toprule
		Dataset & \# Instance & \# Feature & \# Avg.Label & \# Class & Domain & Model $\phi(x,\theta)$\\
		\midrule
		Lost & 1122 &108 &2.23 &16 & automatic face naming & Linear Model\\
		BirdSong & 4998 & 38 & 2.18 & 13 & bird song classification & Linear Model\\
		MSRCv2 & 1758 & 48 & 3.16 & 23 & object classification & Linear Model\\
		Soccer Player& 17472& 279& 2.09& 171& automatic face naming & Linear Model\\
		Yahoo! News & 22991 & 163 & 1.91 & 219 & automatic face naming & Linear Model\\
		\bottomrule
	\end{tabular}
\end{table*}
\subsection{Estimation Error Bounds}
In this subsection, we establish the estimation error bounds for the proposed method.

We begin with several useful definitions and lemmas as follows.
\begin{Definition}[$\epsilon$-cover]
	For a set $\mathcal{A}$, if for $\forall \alpha \in \mathcal{B}, \exists \alpha^{\prime} \in \mathcal{A}$ satisfies $\left\|\alpha-\alpha^{\prime}\right\| \leq \epsilon$, then $\mathcal{A}$ is an $\epsilon$-cover of $\mathcal{B}$.
\end{Definition}
\begin{Definition}[Rademacher Complexity]
	Define $p(x,y)$ is the underlying joint density of random variables $(\x,\y) \in \mathcal{X} \times \mathcal{Y}$.
	Let $\mathcal{G} = \{ g(\x)\}$ be a family of functions for empirical risk minimization, $\sigma_1, \dots, \sigma_n$ be $n$ Rademacher variables, then we can define the Rademacher complexity of $\mathcal{G}$ over $p(x, y)$ with example size $n$ as follows\cite{mohri2018foundations}:
	\begin{equation*}
		\Re_n(\mathcal{G})=\mathbb{E}_{\sigma_1, \dots, \sigma_n} \left[ \sup_{g\in \mathcal{G}} \frac{1}{n} \sum_{\x_i\in \mathcal{X}} \sigma_i g(\x_i)\right];
	\end{equation*}
\end{Definition}

Assume the meta-objective function $\ell$ is upper-bounded by $K$ and Lipschitz-smooth with a constant $L_\ell$, i.e.,
\[K = \sup_{\x \in \mathcal{X}, \y \in \mathcal{Y}, g\in \mathcal{G}}\ell(g(\x), \y).\]

For notational simplicity, we use $\hat{R}$ and $R$ to denote empirical risk and generalization risk, respectively.
Following prior works\cite{lv2020progressive,ren2018learning,guo2020safe,ishida2017learning}, we summarized two lemmas as follows.

\begin{Lemma}(\cite{lv2020progressive,ishida2017learning})
	\label{lemma:1}
	For any $\delta > 0$, with a probability at least $1-\delta$, we have
	\begin{equation*}
		\sup_{g \in \mathcal{G}} \left| R(g) - \hat{R}(g) \right|  \le 2\Re_n(\ell \circ \mathcal{G}) + K \sqrt{\frac{\log(2/\delta)}{2n}}.
	\end{equation*}
\end{Lemma}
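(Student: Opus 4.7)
The plan is to prove this as a textbook Rademacher-complexity uniform convergence bound, which decomposes into two classical ingredients: a McDiarmid concentration step to reduce the random supremum to its expectation, followed by a symmetrization step to control that expectation by the Rademacher complexity. Since the statement is attributed to prior work, I would mainly verify that the constants in our setting match, particularly that the meta loss is bounded by $K$ and that the training sample has size $n$.

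First I would define the random quantity $\Phi(S) = \sup_{g \in \mathcal{G}} \bigl(R(g) - \hat{R}(g)\bigr)$ on an i.i.d.\ sample $S = \{(\x_i, \y_i)\}_{i=1}^n$ drawn from $p(x,y)$. Because $\ell$ takes values in $[0, K]$, replacing any single example $(\x_i, \y_i)$ by $(\x_i', \y_i')$ changes $\hat{R}(g)$ by at most $K/n$ uniformly in $g$, so $\Phi$ satisfies the bounded-differences property with constants $c_i = K/n$. McDiarmid's inequality then yields, with probability at least $1 - \delta/2$,
\begin{equation*}
\Phi(S) \le \mathbb{E}[\Phi(S)] + K\sqrt{\frac{\log(2/\delta)}{2n}}.
\end{equation*}
An analogous argument applied to $\sup_{g \in \mathcal{G}} \bigl(\hat{R}(g) - R(g)\bigr)$ gives the matching lower-tail control, and a union bound over the two sides produces the two-sided estimate with the $\log(2/\delta)$ factor that appears in the statement.

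Next I would bound $\mathbb{E}[\Phi(S)]$ by standard symmetrization: introducing an independent ghost sample $S' = \{(\x_i', \y_i')\}_{i=1}^n$, one writes $R(g) = \mathbb{E}_{S'}[\hat{R}'(g)]$, pulls the expectation outside the supremum via Jensen, and then injects i.i.d.\ Rademacher signs $\sigma_i$ using the fact that $\ell(g(\x_i), \y_i) - \ell(g(\x_i'), \y_i')$ has a symmetric distribution. Splitting the resulting supremum of the signed sum into its two halves contributes a factor of $2$, which gives
\begin{equation*}
\mathbb{E}[\Phi(S)] \le 2\,\Re_n(\ell \circ \mathcal{G}).
\end{equation*}
Combining this with the McDiarmid bound from the previous step yields the claim.

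The main obstacle is largely bookkeeping rather than mathematical depth: one must make sure that the symmetrization is applied to the composite loss class $\ell \circ \mathcal{G}$ (not to $\mathcal{G}$ itself), and that the constant $K$ controlling the boundedness of $\ell$ is exactly the one used in the McDiarmid step. In our PLL context, a subtle point worth checking is whether the boundedness assumption $K = \sup \ell(g(\x), \y)$ is actually compatible with the cross-entropy loss used in $\mathcal{L}^{meta}$; one typically guarantees this by assuming the network outputs are clipped away from zero, so that $\log \hat{y}_{ij}$ stays bounded. Once that compatibility is secured, the combination of the two steps above delivers the stated inequality with probability at least $1 - \delta$.
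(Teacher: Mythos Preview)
Your proof proposal is correct and follows the standard textbook route (bounded differences plus symmetrization) that underlies this inequality; the constants and the two-sided union bound are handled properly. The paper, however, does not actually prove this lemma at all: it is stated with citations to \cite{lv2020progressive,ishida2017learning} and invoked as a known result in the proof of Theorem~\ref{theorem:eb}, so there is no paper-side argument to compare against beyond noting that your derivation is exactly the classical one those references rely on.
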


\begin{Lemma}(\cite{lv2020progressive,ishida2017learning})
	\label{lemma:2}
	Suppose the loss function $\ell$ is defined as above section, we have
	\begin{equation*}
		\Re_n(\ell \circ \mathcal{G}) \le \sqrt{2}cL_{\ell}\Re_n(\mathcal{G}),
	\end{equation*}
	where $L_\ell$ is the Lipschitz constant of the loss function $\ell$.
\end{Lemma}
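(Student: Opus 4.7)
The plan is to invoke a vector-valued Talagrand contraction inequality (of the type due to Maurer, 2016), which is tailored precisely to the situation of composing a scalar $L_\ell$-Lipschitz loss $\ell$ with a function class whose elements take values in $\mathbb{R}^c$. First I would expand the definition of $\Re_n(\ell \circ \mathcal{G})$ as an expected supremum of $\frac{1}{n}\sum_{i=1}^n \sigma_i\, \ell(g(\x_i),\y_i)$, and view each map $\ell(\cdot,\y_i):\mathbb{R}^c \to \mathbb{R}$ as $L_\ell$-Lipschitz (this is immediate from the hypothesis). Applying vector contraction then yields
\begin{equation*}
\Re_n(\ell \circ \mathcal{G}) \le \sqrt{2}\, L_\ell \, \mathbb{E}_{\sigma} \sup_{g \in \mathcal{G}} \frac{1}{n} \sum_{i=1}^n \sum_{k=1}^c \sigma_{ik}\, g_k(\x_i),
\end{equation*}
where $\{\sigma_{ik}\}$ is a doubly-indexed family of independent Rademacher variables and $g_k$ denotes the $k$-th coordinate of $g$.

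Next I would decouple the coordinate sum. Pushing the supremum inside the $k$-sum (which can only enlarge the value) gives
\begin{equation*}
\mathbb{E}_{\sigma} \sup_{g \in \mathcal{G}} \frac{1}{n} \sum_{i=1}^n \sum_{k=1}^c \sigma_{ik}\, g_k(\x_i) \;\le\; \sum_{k=1}^c \Re_n(\mathcal{G}_k),
\end{equation*}
where $\mathcal{G}_k=\{\x \mapsto g_k(\x) : g\in\mathcal{G}\}$ is the $k$-th coordinate projection of $\mathcal{G}$. Bounding each $\Re_n(\mathcal{G}_k)$ uniformly by $\Re_n(\mathcal{G})$ (either by symmetry of the classifier architecture across output coordinates, or by redefining $\Re_n(\mathcal{G})$ as $\max_k \Re_n(\mathcal{G}_k)$, consistently with the scalar Rademacher complexity in the paper's Definition) produces the factor of $c$ and yields the claimed inequality $\Re_n(\ell\circ\mathcal{G}) \le \sqrt{2}\,c\,L_\ell\,\Re_n(\mathcal{G})$.

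The main obstacle I expect is the $\sqrt{2}$ factor. This does not come from the classical scalar Talagrand/Ledoux contraction inequality, which would only give a factor of $2L_\ell$ without the $\sqrt{2}$ improvement to a vector sum; obtaining $\sqrt{2}$ requires Maurer's argument, which routes through a careful conditioning on Rademacher chaoses and the subadditivity of Gaussian-like complexities. A secondary technical point is that the Lipschitz constant $L_\ell$ must be measured with respect to the Euclidean norm on $\mathbb{R}^c$ to match the hypothesis of the vector contraction inequality; if the paper's $L_\ell$ were only defined coordinate-wise or under $\ell_\infty$, a norm-equivalence step contributing an extra $\sqrt{c}$ would be needed, which would alter the stated constant. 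Finally, one should verify that the symmetrization conventions used in the paper's definition of $\Re_n$ are compatible with the doubly-indexed Rademacher variables introduced in the contraction step, which is routine but worth checking explicitly before closing the argument.
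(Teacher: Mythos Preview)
Your approach is correct and is precisely the standard route: Maurer's vector-valued contraction lemma followed by a coordinate-wise decoupling of the $c$ outputs. However, note that the paper does \emph{not} actually prove this lemma; it is stated with citations to \cite{lv2020progressive,ishida2017learning} and used as a black box in the proof of Theorem~\ref{theorem:eb}. The argument you sketch is essentially the one found in those references (and ultimately in Maurer's 2016 paper), so there is nothing to compare against in the present paper itself.
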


Based on Lemma \ref{lemma:1} and \ref{lemma:2}, we can establish the estimation error bounds as follows.

\begin{Theorem}[estimation error bounds]
	\label{theorem:eb}
	Let $\boldsymbol{w} \in \mathbb{B}^c$ be the parameter of label confidence of the examples, which is bounded in a $c$-dimensional unit ball. Let
	Then, the generalization risk can be defined as:
	\[R(\theta)=\mathbb{E}_{p(\x, \y)}[\ell(\phi(\x;\theta), \y)].\]
	Let $\hat{\w} =\arg \min _{\boldsymbol{w} \in \mathcal{A}} \hat{R}(\hat{\theta}(\boldsymbol{w}))$ be the empirically optimal parameter in a candidate set $\mathcal{A}$, and let $\w^*=\arg \min _{\boldsymbol{w} \in \mathbb{B}^{c}} R(\hat{\theta}(\boldsymbol{w}))$ be the optimal parameter in the unit ball.
	Then, with a probability at least $1-\delta$, we have,
	\begin{equation*}
		R(\hat{\theta}(\hat{\w})) - R(\hat{\theta}(\w^*)) \le 4\sqrt{2}cL_\ell\Re(\mathcal{G}) + K\sqrt{\frac{2\log(2/\delta)}{n}},
	\end{equation*}
	where $n$ denotes the number of examples.

\end{Theorem}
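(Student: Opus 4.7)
The plan is to follow the classical ``two-times-supremum'' template for empirical risk minimization, treating the composition $\theta\mapsto\hat\theta(\boldsymbol w)$ as fixed and viewing the problem as minimizing $R$ over the induced class of predictors. First I would introduce the empirical risk as a pivot and decompose
\begin{equation*}
R(\hat\theta(\hat{\boldsymbol w})) - R(\hat\theta(\boldsymbol w^*))
= \bigl[R(\hat\theta(\hat{\boldsymbol w})) - \hat R(\hat\theta(\hat{\boldsymbol w}))\bigr]
+ \bigl[\hat R(\hat\theta(\hat{\boldsymbol w})) - \hat R(\hat\theta(\boldsymbol w^*))\bigr]
+ \bigl[\hat R(\hat\theta(\boldsymbol w^*)) - R(\hat\theta(\boldsymbol w^*))\bigr].
\end{equation*}
The middle bracket is non-positive by the definition of $\hat{\boldsymbol w}$ as the empirical minimizer over $\mathcal A$, while each of the outer brackets is at most $\sup_{g\in\mathcal G}|R(g)-\hat R(g)|$, so the left-hand side is bounded by twice this uniform deviation.

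Next I would invoke Lemma~\ref{lemma:1} directly on the function class $\mathcal G$ with confidence parameter $\delta$, obtaining with probability at least $1-\delta$
\begin{equation*}
\sup_{g\in\mathcal G}\bigl|R(g)-\hat R(g)\bigr|
\;\le\; 2\,\Re_n(\ell\circ\mathcal G) + K\sqrt{\tfrac{\log(2/\delta)}{2n}}.
\end{equation*}
Doubling this gives $4\,\Re_n(\ell\circ\mathcal G) + K\sqrt{2\log(2/\delta)/n}$. Finally I would apply Lemma~\ref{lemma:2} to eliminate the loss from the complexity term via the Lipschitz-contraction-type bound $\Re_n(\ell\circ\mathcal G)\le \sqrt{2}\,c\,L_\ell\,\Re_n(\mathcal G)$, which immediately produces the stated bound $4\sqrt{2}cL_\ell\,\Re_n(\mathcal G) + K\sqrt{2\log(2/\delta)/n}$.

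The only subtle step, and the place I would spend the most care, is justifying that the second bracket in the decomposition is non-positive: strictly speaking $\hat{\boldsymbol w}$ ranges over the $\epsilon$-cover $\mathcal A$ whereas $\boldsymbol w^*$ ranges over the full unit ball $\mathbb B^c$. Because the final bound carries no $\epsilon$ term, either one treats $\mathcal A = \mathbb B^c$, or one argues that $\hat\theta(\cdot)$ (obtained by one gradient step in Eq.~\eqref{eq:theta_hat}) is Lipschitz in $\boldsymbol w$ and composes this with the Lipschitz-smoothness of $\ell$, so that the approximation error incurred by replacing $\boldsymbol w^*$ by its nearest cover point is $O(\epsilon)$ and can be absorbed by choosing a sufficiently fine cover. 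Everything else in the proof is essentially bookkeeping on top of Lemmas~\ref{lemma:1} and~\ref{lemma:2}.
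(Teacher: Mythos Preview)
Your proposal is correct and follows exactly the same three-term decomposition and application of Lemmas~\ref{lemma:1} and~\ref{lemma:2} as the paper's own proof. In fact you are more careful than the paper: the paper simply asserts $\hat R(\hat\theta(\hat{\boldsymbol w}))\le \hat R(\hat\theta(\boldsymbol w^*))$ ``by the definition of $\boldsymbol w^*$ and $\hat{\boldsymbol w}$'' without ever addressing the $\mathcal A$ versus $\mathbb B^c$ discrepancy that you flag.
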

\begin{proof}
	\begin{align*}
		R(\hat{\theta}(\hat{\w})) - R(\hat{\theta}(\w^*)) & = \left(\hat{R}(\hat{\theta}(\hat{\w})) - \hat{R}(\hat{\theta}(\w^*))\right) \\
														  & + \left(R(\hat{\theta}(\hat{\w})) - \hat{R}(\hat{\theta}(\hat{\w})) \right) \\
														  &	+ \left(\hat{R}(\hat{\theta}(\w^*)) - R(\hat{\theta}(\w^*)) \right) \\
														  & \le 0 + 2 \max|\hat{R}(\hat{\theta}(\alpha)) - R(\hat{\theta}(\alpha))| \\
														  & \le 4\sqrt{2}cL_\ell\Re(\mathcal{G}) + K\sqrt{\frac{2\log(2/\delta)}{n}},
	\end{align*}
	where we used that $\hat{R}(\hat{\theta}(\w^*)) > \hat{R}(\hat{\theta}(\hat{\w}))$ by the definition of $\w^*$ and $\hat{\w}$.
\end{proof}

Theorem \ref{theorem:eb} guarantees that: $n\rightarrow\infty$, $R(\hat{\theta}(\hat{\w})) \rightarrow R(\hat{\theta}(\w^*))$.

\begin{table*}[!tb]
	\caption{Accuracy (Mean$\pm$Std) Comparison Results on Fashion-MNIST with \textbf{Uniform} Partial Labels on Different Ambiguity Levels, the Backbone is \textbf{Linear} Model. The Best Results are Shown in Bold.}
	\label{tab:result fashion}
	\centering
	\begin{tabular}{c|c| c c c c}
		\toprule
		Dataset & Method & $q=0.1$ & $q=0.3$ & $q=0.5$ & $q=0.7$\\
		\midrule
		\multirow{11}{*}{Fashion-MNIST}
		& MoGD & $\mathbf{85.74 \pm 0.09\%}$ & $\mathbf{85.62 \pm 0.15\%}$ &  $\mathbf{85.13 \pm 0.18\%}$ & $\mathbf{83.87 \pm 0.20\%}$  \\
		& VALEN & $82.33 \pm 0.24 \%$ & $81.72 \pm 0.13 \%$ & $80.93 \pm 0.36 \%$ & $78.46\pm 0.32\%$ \\
		& LWS & $84.62 \pm 0.14 \%$ & $84.26 \pm 0.18 \%$ & $83.98 \pm 0.08 \%$ & $82.47\pm 0.21\%$ \\
		& PRODEN & $84.16 \pm 0.08 \%$ & $83.94 \pm 0.12 \%$ & $83.87 \pm 0.06 \%$ & $82.15\pm 0.26\%$  \\
		& RC & $84.53 \pm 0.11 \%$ & $84.16 \pm 0.15 \%$ & $83.95 \pm 0.17 \%$ & $82.64\pm 0.33\%$ \\
		& CC & $84.42 \pm 0.18 \%$ & $84.22 \pm 0.06 \%$ & $83.84 \pm 0.10 \%$ & $82.88\pm 0.30\%$  \\
		& MSE & $82.21 \pm 0.14 \%$ & $81.02 \pm 0.18 \%$ & $77.25 \pm 0.46 \%$ & $70.47\pm 0.51\%$  \\
		& EXP & $83.01 \pm 0.15 \%$ & $80.54 \pm 0.23 \%$ & $79.09 \pm 0.37 \%$ & $71.12\pm 0.65\%$  \\
		& \cellcolor[HTML]{D9D9D9}Fully Supervised&  \multicolumn{4}{c}{\cellcolor[HTML]{D9D9D9} $86.21 \pm 0.13 \%$}  \\
		\bottomrule
	\end{tabular}
\end{table*}
\begin{table*}[!tb]
	\caption{Accuracy (Mean$\pm$Std) Comparison Results on CIFAR-10 and CIFAR-100 with \textbf{Uniform} Partial Labels on Different Ambiguity Levels, the Backbone is \textbf{ConvNet} Model. The Best Results are Shown in \textbf{Bold}.}
	\label{tab:result cifar cnn}
	\centering
	\begin{tabular}{c|c| c c c c}
		\toprule
		Dataset & Method & $q=0.1$ & $q=0.3$ & $q=0.5$ & $q=0.7$\\
		\midrule
		\multirow{11}{*}{CIFAR-10}
		&MoGD & $\mathbf{92.67 \pm 0.08\%}$ & $\mathbf{91.64 \pm 0.09\%}$ &  $\mathbf{88.92 \pm 0.07\%}$ & $\mathbf{85.11 \pm 0.14\%}$  \\
		& VALEN & $75.81 \pm 0.15 \%$ & $71.68 \pm 0.23 \%$ & $63.85 \pm 0.36 \%$ & $54.62\pm 0.86\%$ \\
		& LWS & $87.63 \pm 0.17 \%$ & $84.65 \pm 0.28\%$ & $68.29 \pm 0.31 \%$ & $55.57 \pm 0.65\%$ \\

		& PRODEN & $90.22 \pm 0.04 \%$ & $88.56 \pm 0.08\%$ & $84.48 \pm 0.03 \%$ & $81.58\pm 0.09\%$  \\
		& RC & $88.08 \pm 0.05 \%$ & $86.91 \pm 0.10 \%$ & $79.54 \pm 0.12 \%$ & $70.08 \pm 0.17\%$ \\
		& CC & $88.24 \pm 0.09 \%$ & $85.72 \pm 0.08 \%$ & $74.14 \pm 0.18 \%$ & $67.26 \pm 0.21\%$  \\
		& MSE & $78.32 \pm 0.13 \%$ & $71.47 \pm 0.30 \%$ & $65.18 \pm 0.16 \%$ & $60.43\pm 0.24\%$  \\
		& EXP & $77.64 \pm 0.15 \%$ & $72.38 \pm 0.26 \%$ & $68.94 \pm 0.14 \%$ & $62.19\pm 0.26\%$  \\
		& \cellcolor[HTML]{D9D9D9}Fully Supervised&  \multicolumn{4}{c}{\cellcolor[HTML]{D9D9D9} $93.74 \pm 0.12 \%$}  \\
		\midrule
		Dataset & Method & $q=0.01$ & $q=0.05$ & $q=0.1$ & $q=0.2$\\
		\midrule
		\multirow{11}{*}{CIFAR-100}
		&MoGD & $\mathbf{72.14 \pm 0.23\%}$ & $\mathbf{70.26 \pm 0.34\%}$ &  $\mathbf{65.23 \pm 0.41\%}$ & $\mathbf{56.84 \pm 1.26\%}$  \\

		& VALEN & $52.64 \pm 0.20 \%$ & $51.32 \pm 0.34 \%$ & $44.38 \pm 0.23 \%$ & $12.68\pm 0.72\%$ \\
		& LWS & $59.74 \pm 0.31 \%$ & $57.21 \pm 0.28\%$ & $46.52 \pm 0.51 \%$ & $30.26 \pm 0.63\%$ \\

		& PRODEN & $67.51 \pm 0.05 \%$ & $63.69 \pm 0.14 \%$ & $54.79 \pm 0.12 \%$ & $44.36\pm 0.58\%$  \\
		& RC & $61.59 \pm 0.15 \%$ & $53.86 \pm 0.23 \%$ & $44.60 \pm 0.38 \%$ & $33.62\pm 0.84\%$ \\
		& CC & $60.72 \pm 0.21 \%$ & $52.94 \pm 0.18 \%$ & $45.85 \pm 0.61 \%$ & $39.73\pm 0.95\%$  \\
		& MSE & $46.74 \pm 0.15 \%$ & $43.16 \pm 0.21 \%$ & $40.18 \pm 1.24 \%$ & $6.88\pm 0.16\%$  \\
		& EXP & $42.56 \pm 0.18 \%$ & $38.85 \pm 0.33 \%$ & $26.72 \pm 2.47 \%$ & $6.14\pm 0.22\%$  \\
		& \cellcolor[HTML]{D9D9D9}Fully Supervised&  \multicolumn{4}{c}{\cellcolor[HTML]{D9D9D9} $73.56 \pm 0.08 \%$}  \\
		\bottomrule
	\end{tabular}
\end{table*}
\section{Experiments}
\label{sec:experiment}
In this section, we conduct experiments on various datasets, and compare it with some state-of-the-art partial label learning algorithms to validate the effectiveness of MoGD.

\subsection{Experiment Setting}
\subsubsection{Datasets}
We perform experiments on three widely used benchmark image datasets, including Fashion-MNIST\footnote{\url{https://github.com/zalandoresearch/fashion-mnist}}\cite{xiao2017fashion},
CIFAR-10 and CIFAR-100 \footnote{\url{https://www.cs.toronto.edu/~kriz/cifar.html}} \cite{krizhevsky2009learning} as well as five real-world PLL datasets, including Lost \cite{cour2011learning}, BirdSong \cite{briggs2012rank}, MSRCv2 \cite{liu2012conditional}, Soccer Player \cite{zeng2013learning}, and Yahoo! News \cite{guillaumin2010multiple}.
Table \ref{tab:char cifar} and Table \ref{tab:char real world} summarize the detailed characteristics of these datasets, respectively.

Note that it requires to manually generate the candidate label sets for three image datasets, since they are supposed to be used for multi-class classification. We manually generate candidate labels for training examples from these datasets based on two different generation process, i.e., uniform generation process \cite{lv2020progressive} and instance-dependent generation process \cite{xu2021instance}.

Specifically, we generate candidate labels by flipping a negative label $(y=0)$ into false positive label $\tilde{y}$ with a flipping probability $q$, where $q=P(\tilde{y}=1 \mid y = 0)$. In other words, we employ a binomial flipping strategy: for each training example, we conduct $c-1$ independent Bernoulli experiments, and each experiment determines whether a false positive label is generated with probability $q$. To make sure all training example have two or more candidate labels, for the instance that there is none false positive labels generated, we randomly select a negative label and add it to the candidate label set. Similar to previous works\cite{lv2020progressive,feng2020provably}, we consider $q\in\{0.1,0.3,0.5,0.7\}$ for Fashion-MNIST and CIFAR-10, and $q\in\{0.01,0.05, 0.1, 0.2\}$ for CIFAR-100. Generally, a lager value of $q$ indicates a higher level of ambiguity of training examples.

To construct instance-dependent candidate labels, we follow the generating procedure used in \cite{xu2021instance}. Specifically, for an instance $\x_i$, let $q_j(\x_i)=p(\tilde{\y_j}=1|\y_j=0,\x_i)$ be the flipping probability of $j$-th label, where $\y$ denotes the original label vector.
Then we use the prediction probability of a clean network $g$ trained on the clean labels to set the flipping probability of each negative label with $q_j(\x_i) = \frac{g_j(\boldsymbol{\x_i})}{\max_{k\in \bar{Y}_i}g_k(\boldsymbol{\x_i})}$, where $\bar{Y}_i$ denotes the negative label set of $\x_i$.
In our experiments, we directly adopt the pre-trained model released by \cite{xu2021instance} to generate instance-dependent candidate labels.

\subsubsection{Compared methods}
In order to validate the effectiveness of the proposed method, we first compare MoGD against the following five deep PLL methods on three image datasets:
\begin{itemize}
	\item VALEN\cite{xu2021instance}: An instance-dependent method which uses variational inference technique to iteratively estimate the latent label distribution during the training stage.
	\item LWS\cite{wen2021leveraged}: A discriminative method which imposes a trade-off between losses on non-candidate labels and candidate labels.
	\item PRODEN\cite{lv2020progressive}: A progressive identification method which identifies the true labels according to the output of classifier itself.
	\item RC\cite{feng2020provably}: A risk-consistent method which establishes the true risk estimator by employing the importance re-weighting strategy.
	\item CC\cite{feng2020provably}: A classifier-consistent method which establishes the empirical risk estimator by employing the transition matrix and cross entropy loss.
\end{itemize}
The baseline methods:
\begin{itemize}
	\item EXP\cite{feng2020learning} and MSE\cite{feng2020learning} are two simple baselines which utilize exponential loss and mean square error as the risks, respectively.
	\item Fully Supervised: It trains a multi-class classifier based on ordinary labeled data and can be regarded as an upper bound of performance.
\end{itemize}

In addition, we also compare MoGD with eight classical PLL methods on five real-world datasets, including
GA\cite{ishida2019complementary},
$\text{D}^2$CNN\cite{yao2020deep}, SURE\cite{feng2019sure}, CLPL\cite{cour2011learning}, ECOC\cite{zhang2017disambiguation}, PLSVM\cite{nguyen2008classification}, PL$k$NN\cite{hullermeier2006learning}, IPAL\cite{zhang2015solving}. The details of these methods can be found in the related work section. In our experiments, the hyper-parameters are determined based on the recommended parameter settings in their original literature.
\begin{table*}[htbp]
	\caption{Accuracy (Mean$\pm$Std) Comparison Results on CIFAR-10 and CIFAR-100 with \textbf{Uniform} Partial Labels on Different Ambiguity Levels, the Backbone is \textbf{ResNet} Model. The Best Results are Shown in \textbf{Bold}.}
	\label{tab:result cifar resnet}
	\centering
	\begin{tabular}{c|c| c c c c}
		\toprule
		Dataset & Method & $q=0.1$ & $q=0.3$ & $q=0.5$ & $q=0.7$\\
		\midrule
		\multirow{11}{*}{CIFAR-10}
		& MoGD & $\mathbf{86.47 \pm 0.07\%}$ & $\mathbf{86.31 \pm 0.14\%}$ &  $\mathbf{85.92 \pm 0.13\%}$ & $\mathbf{83.46 \pm 0.09\%}$  \\
		& VALEN & $70.52 \pm 0.13 \%$ & $68.37 \pm 0.26 \%$ & $60.85 \pm 0.48 \%$ & $52.73\pm 1.58\%$ \\
		& LWS & $82.16 \pm 0.12 \%$ & $80.84 \pm 0.16\%$ & $63.38 \pm 0.26 \%$ & $54.86 \pm 0.26\%$ \\
		& PRODEN & $85.39 \pm 0.17 \%$ & $84.74 \pm 0.29\%$ & $83.79 \pm 0.12 \%$ & $79.88\pm 0.11\%$  \\
		& RC & $83.69 \pm 0.06 \%$ & $81.58 \pm 0.42 \%$ & $78.92 \pm 0.41 \%$ & $64.48 \pm 1.24\%$ \\
		& CC & $82.48 \pm 0.08 \%$ & $80.44 \pm 0.39 \%$ & $76.36 \pm 0.47 \%$ & $65.32 \pm 1.61\%$  \\
		& MSE & $61.52 \pm 0.94 \%$ & $58.39 \pm 0.74 \%$ & $53.15 \pm 0.92 \%$ & $43.52\pm 2.81\%$  \\
		& EXP & $63.24 \pm 0.82 \%$ & $59.32 \pm 0.87 \%$ & $55.35 \pm 0.64 \%$ & $44.48\pm 2.13\%$  \\
		& \cellcolor[HTML]{D9D9D9}Fully Supervised&  \multicolumn{4}{c}{\cellcolor[HTML]{D9D9D9} $87.32 \pm 0.08 \%$}  \\
		\midrule
		Dataset & Method & $q=0.01$ & $q=0.05$ & $q=0.1$ & $q=0.2$\\
		\midrule
		\multirow{11}{*}{CIFAR-100}
		& MoGD & $\mathbf{68.77 \pm 0.34\%}$ & $\mathbf{64.85 \pm 0.42\%}$ &  $\mathbf{58.46 \pm 0.27\%}$ & $\mathbf{50.33 \pm 0.56\%}$  \\

		& VALEN & $48.82 \pm 0.39 \%$ & $47.65 \pm 0.54 \%$ & $38.66 \pm 0.63 \%$ & $10.45\pm 0.84\%$ \\
		& LWS & $53.46 \pm 0.27 \%$ & $52.36 \pm 0.23\%$ & $40.67 \pm 0.43 \%$ & $28.74 \pm 0.85\%$ \\

		& PRODEN & $57.71 \pm 0.46 \%$ & $53.16 \pm 0.29 \%$ & $50.62 \pm 0.54 \%$ & $40.78\pm 0.65\%$  \\
		& RC & $54.67 \pm 0.18 \%$ & $51.89 \pm 0.31 \%$ & $39.42 \pm 0.12 \%$ & $30.51\pm 0.18\%$ \\
		& CC & $51.91 \pm 0.34 \%$ & $51.83 \pm 0.56 \%$ & $41.57 \pm 0.34 \%$ & $35.13\pm 0.23\%$  \\
		& MSE & $38.56 \pm 0.26 \%$ & $36.61 \pm 0.38 \%$ & $31.27 \pm 0.13 \%$ & $4.82\pm 0.19\%$  \\
		& EXP & $36.37 \pm 0.16 \%$ & $34.73 \pm 0.14 \%$ & $24.45 \pm 0.28 \%$ & $4.25\pm 0.22\%$  \\
		& \cellcolor[HTML]{D9D9D9}Fully Supervised&  \multicolumn{4}{c}{\cellcolor[HTML]{D9D9D9} $70.08 \pm 0.11 \%$}  \\
		\bottomrule
	\end{tabular}
\end{table*}

\subsubsection{Implementation Details}
We employ multiple basic models, including linear model, 12-layer ConvNet\cite{laine2016temporal} and Wide-ResNet-28\cite{zagoruyko2016wide} to show that the proposed approach is compatible with different learning models.
Specifically, for Fashion-MNIST and five real-world datasets, we employ Linear Net as the backbone neural network while using ConvNet and ResNet for CIFAR-10 and CIFAR-100.

To make a fair comparison, we employ the same network architecture as the base model for all methods. For our method, we train the model based on a tiny validation set (a mini-batch). Specifically, before training, we randomly sample a mini-batch precisely labeled instances as the validation set. For the comparing methods, we add the validation examples into training set, which indicates that the classifier is trained on the original training examples and validation examples. Note that the comparing methods directly train the classifier with the validation examples, while MoGD only uses them for auxiliary disambiguation.
Furthermore, the parameters are determined as suggested in their original papers. For CIFAR-10 and CIFAR-100, we perform a strong augmentation (containing Cutout \cite{devries2017improved} and random horizontal flip) for each training image as done in \cite{sohn2020fixmatch} \footnote{https://github.com/kekmodel/FixMatch-pytorch}.
We employ the SGD optimizer \cite{robbins1951stochastic} with the momentum of 0.9 to train the model for 500 epochs. For all experiments, we repeat $5$ independent experiments (with different random seeds) and report the average results and standard deviation. All experiments are carried on Pytorch \footnote{https://pytorch.org/}\cite{paszke2019pytorch} with GeForce RTX 3080Ti GPUs.

\begin{table}[htbp]
	\centering
	\caption{Accuracy (Mean$\pm$Std) Comparison Results on Fashion-MNIST and CIFAR-10 with \textbf{Instance-Dependent} Partial Labels. We Use \textbf{ResNet} As the Backbone Network.}
	\label{tab:instance-dependent}
	\begin{tabular}{c|cc}
		\toprule
		Method & Fashion-MNIST & CIFAR-10 \\
		\midrule
		MoGD & $\mathbf{85.68 \pm 0.12\%}$ & $\mathbf{80.76 \pm 0.17\%}$ \\
		VALEN & $83.55 \pm 0.21\%$ & $66.37 \pm 0.18\%$ \\
		LWS   & $84.36 \pm 0.34\%$ & $48.82 \pm 0.25\%$ \\
		PRODEN& $83.12 \pm 0.46\%$ & $75.43 \pm 0.65\%$ \\
		RC    & $84.74 \pm 0.14\%$ & $74.14 \pm 0.16\%$ \\
		CC    & $83.88 \pm 0.10\%$ & $77.45 \pm 0.23\%$ \\
		\bottomrule
	\end{tabular}
\end{table}
\begin{table*}[!tb]
	\caption{Accuracy (Mean$\pm$Std) Comparison Results on Real-World Datasets with \textbf{Linear} Model. The Best Results are Shown in Bold.}
	\label{tab:result real-world}
	\centering
	\begin{tabular}{c| c c c c c}
		\toprule
		& Lost & MSRCv2 & BirdSong & Soccer Player & Yahoo! News\\
		\midrule
		MoGD & $74.65 \pm 1.43\%$& $45.38 \pm 1.89\%$ & $\mathbf{72.69 \pm 1.65\%}$ &  $\mathbf{57.86 \pm 0.32\%}$ & $\mathbf{68.82 \pm 0.44\%}$  \\
		\midrule

		VALEN & $ 72.34 \pm 2.26 \%$ & $47.36 \pm 1.82 \%$ & $ 72.22 \pm 0.36 \%$ & $55.28\pm 0.62\%$ & $67.48 \pm 0.26\%$\\
		LWS & $ \mathbf{76.54 \pm 2.17 \%}$ & $46.85 \pm 1.94\%$ & $ 72.36 \pm 1.48 \%$ & $56.34 \pm 2.14\%$ & $67.64 \pm 0.36\%$\\

		PRODEN          & $76.39\pm1.47\%$ & $45.27\pm1.73\%$ & $72.01\pm0.44\%$ & $55.99\pm0.58\%$ & $67.40\pm 0.55\%$\\
		$\text{D}^2$CNN & $69.61\pm5.48\%$ & $40.17\pm1.99\%$ & $66.58\pm1.49\%$ & $49.06\pm0.15\%$ & $56.39\pm0.89\%$ \\
		GA              & $48.21\pm4.44\%$ & $22.30\pm2.71\%$ & $29.77\pm1.43\%$ & $51.88\pm0.44\%$ & $34.32\pm0.95\%$ \\
		SURE            & $71.61\pm3.44\%$ & $31.57\pm2.48\%$ & $58.04\pm1.22\%$ & $49.16\pm0.20\%$ & $45.73\pm0.90\%$ \\
		CLPL            & $76.17\pm1.81\%$ & $43.64\pm0.24\%$ & $67.56\pm1.12\%$ & $49.88\pm4.29\%$ & $53.74\pm0.95\%$ \\
		ECOC            & $63.93\pm5.45\%$ & $46.78\pm2.84\%$ & $71.47\pm1.24\%$ & $55.51\pm0.54\%$ & $64.78\pm0.78\%$ \\
		PLSVM           & $72.86\pm5.45\%$ & $38.97\pm4.62\%$ & $60.46\pm1.99\%$ & $46.15\pm1.00\%$ & $60.46\pm1.48\%$ \\
		PL$k$NN         & $35.00\pm4.71\%$ & $41.60\pm2.30\%$ & $64.22\pm1.14\%$ & $49.18\pm0.26\%$ & $40.30\pm0.90\%$ \\
		IPAL            & $71.25\pm1.40\%$ & $\mathbf{52.36\pm2.87\%}$ & $71.19\pm1.54\%$ & $54.41\pm0.56\%$ & $66.22\pm0.80\%$ \\
		\bottomrule
	\end{tabular}
\end{table*}

\subsection{Experiments on Image Datasets}
We first report the results of the experiments on three image datasets with uniform candidate labels.

Table \ref{tab:result fashion} demonstrates the comparison results on Fashion-MNIST with linear models. We consider flipping probability $q \in \{0.1, 0.3, 0.5, 0.7\}$. As shown in the table, it can be observed that: 1) MoGD consistently outperforms all comparing methods. 2) Even compared to fully supervised learning, the results show that MoGD achieves competitive performance. In particular, with $q=0.1,0.3$, our method has a very small performance decrease (less than $\boldsymbol{1}\%$), even with $q = 0.5, 0.7$, the performance drop of MoGD is less than $\boldsymbol{1.5}\%, \boldsymbol{3}\%$, respectively, when compared to fully supervised learning.

Table \ref{tab:result cifar cnn} and Table \ref{tab:result cifar resnet} report the comparing results on CIFAR-10 and CIFAR-100 with ConvNet and Resnet, respectively. From the results, it can be observed that:
\begin{enumerate}
	\item MoGD achieves the best performance on all cases with various flipping probabilities.
	\item MoGD achieves comparable performance compared with fully supervised learning with low flipping probabilities on CIFAR-10 and CIFAR-100, e.g., with the ConvNet backbone, the performance drop of $\boldsymbol{1.07}\%$ on CIFAR-10 ($q=0.1$) and $\boldsymbol{1.42}\%$ on CIFAR-100 ($q=0.01$).
	\item MoGD outperforms the comparing methods with a large performance gap on CIFAR-100. For example, by using the ConvNet backbone, our method outperforms the second best method PRODEN by $\mathbf{4.63\%}$, $\mathbf{6.57\%}$, $\mathbf{10.44\%}$, and $\mathbf{12.48\%}$ with flipping probabilities $q=0.01, q=0.05, q=0.1, q=0.2$, respectively.
	\item MoGD achieves a relatively small performance drop as flipping probabilities increase. On the contrary, the comparing methods like LWS, VALEN, MSE and EXP work well on low flipping probabilities, while achieving a significant performance drop with the increase of flipping probabilities.
\end{enumerate}

These experimental results convincingly validate that MoGD can achieve favorable disambiguation ability for candidate labels. The results on CIFAR-100 demonstrate that MoGD show stronger robustness against the comparing methods.

Moreover, we analyze the convergence properties of these methods. Fig. \ref{fig:acc_cifar10} illustrates the performance curves of different methods with regard to test accuracy with the increase of epoch. From the figures, we can see that MoGD can quickly converge with about 40 epochs. Comparing methods like VALEN and LWS converge relatively slowly.

Besides the uniform candidate labels, we also compare these methods on Fashion-MNIST and CIFAR-10 with instance-dependent candidate labels.
Table \ref{tab:instance-dependent} shows that the proposed approach achieves the best performance in all cases. In particular, MoGD outperforms the second best method with $\boldsymbol{0.94}\%$ and $\boldsymbol{3.31}\%$ accuracy on Fashion-MNIST and CIFAR-10, respectively.

\subsection{Experiments on Real-World Datasets}
We also conduct experiments on five real-world PLL datasets to evaluate the practical usefulness of MoGD. Specifically, for all methods, we perform five-fold cross-validation. We use the same Linear model and employ the SGD optimizer \cite{robbins1951stochastic} with a momentum of $0.9$ to train the model for $500$ epochs. Table \ref{tab:result real-world} shows the comparison results between MoGD and comparing methods. From the results, it can observed that our method outperforms all comparing methods on BirdSong, Soccer Player and Yahoo! News. In particular, MoGD outperforms the second best method LWS $\boldsymbol{1.52}\%$ and $\boldsymbol{1.18}\%$ accuracy on Soccer Player and Yahoo! News, respectively. Although the performance of our method is lower than that of some methods on Lost and MSRCv2, it is still acceptable and higher than that of most comparing methods.

\subsection{Study On the Size of Validation Set}
In this section, we conduct experiments to study the influence of the size of validation data by comparing the performance of our method with different validation set size. Fig. \ref{fig:meta_size} illustrates the test accuracy curves of MoGD as the size of the validation set changes among $\{32, 64, 128, 256\}$ on the CIFAR-10, $\{128, 256, 384, 512\}$ on CIFAR-100 and $\{50, 100, 150, 200, 300, 400\}$ on the two real-world datasets include Soccer Player and Yahoo! News.
From the figures, the performance of MoGD is also satisfactory when the size of the validation set is relatively small, such as 128 for CIFAR-10, 256 for CIFAR-100, 100 for Yahoo! News, and 200 for Soccer Player. In fact, the meta objective on the validation set can encourage the classifier net to precisely estimate the confidence of the candidate label, which is similar to the effect of a regularization term.
This phenomenon suggests that, in practical applications, one can precisely annotate a few examples in advance to improve the classification performance.
\begin{figure*}[!tb]
	\centering
	\subfloat[CIFAR-10, ConvNet, $q=0.1$]
	{
		\includegraphics[width=0.23\textwidth]{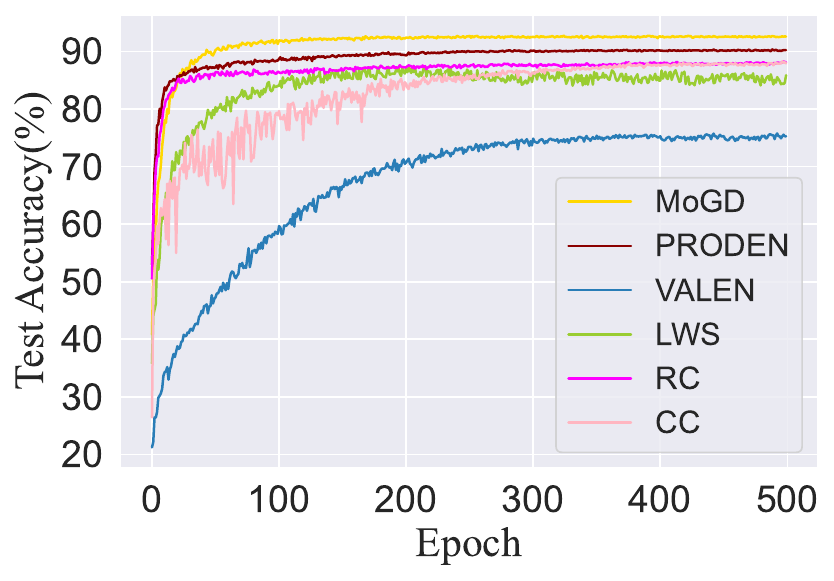}
	}
	\hfil
	\subfloat[CIFAR-10, ConvNet, $q=0.5$]
	{
		\includegraphics[width=0.23\textwidth]{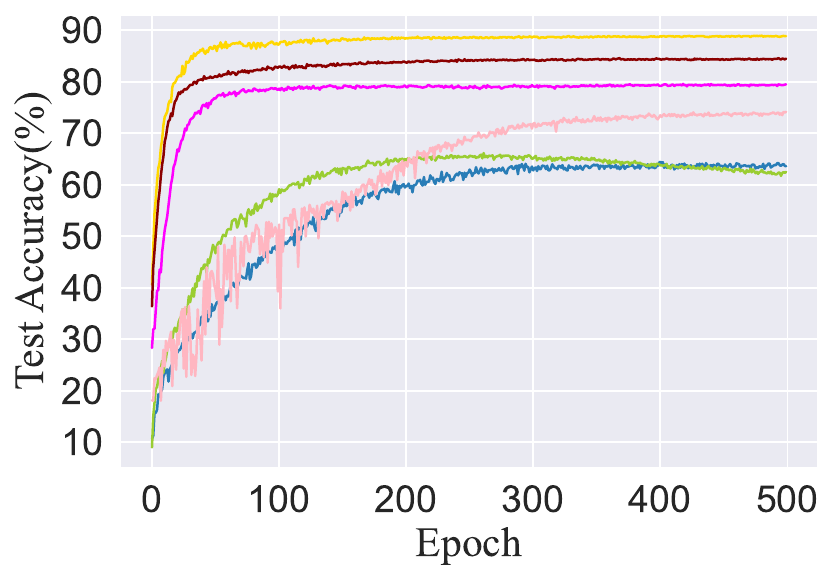}
	}
	\hfil
	\subfloat[CIFAR-10, ResNet, $q=0.1$]
	{
		\includegraphics[width=0.23\textwidth]{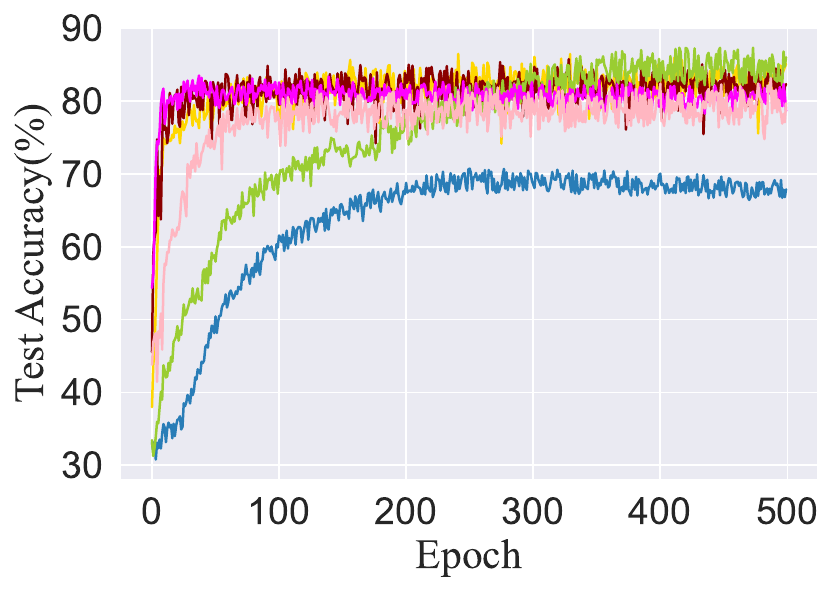}
	}
	\hfil
	\subfloat[CIFAR-10, ResNet, $q=0.5$]
	{
		\includegraphics[width=0.23\textwidth]{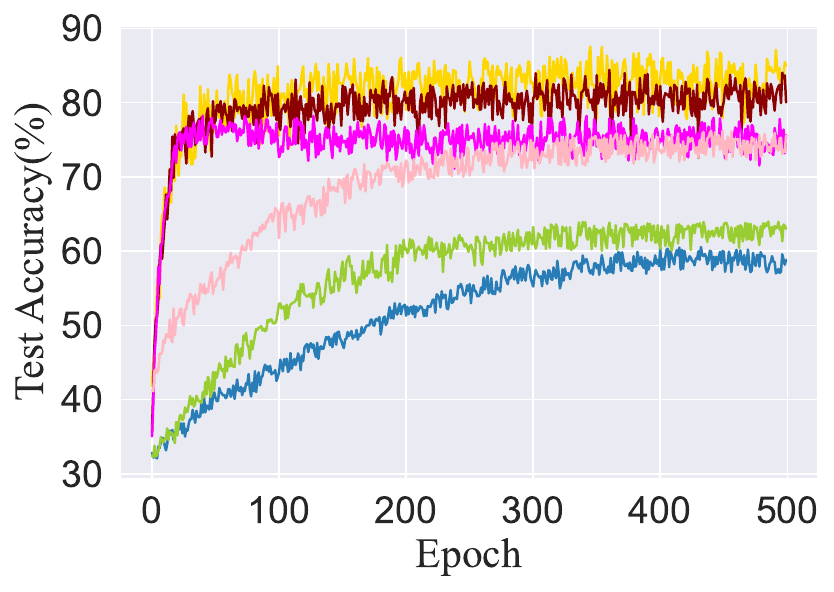}
	}

	\caption{Test accuracy curve for various models and datasets. Different colors represent the corresponding methods.}
	\label{fig:acc_cifar10}
\end{figure*}

\begin{figure*}[!tb]
	\centering
	\subfloat[CIFAR-10]
	{
		\includegraphics[width=0.23\textwidth]{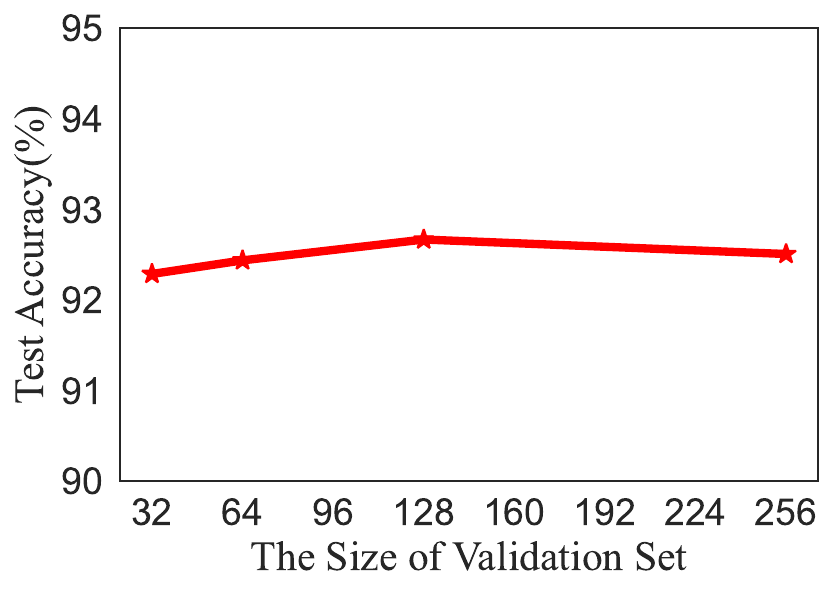}
		\label{fig:1}
	}
	\hfil
	\subfloat[CIFAR-100]
	{
		\includegraphics[width=0.23\textwidth]{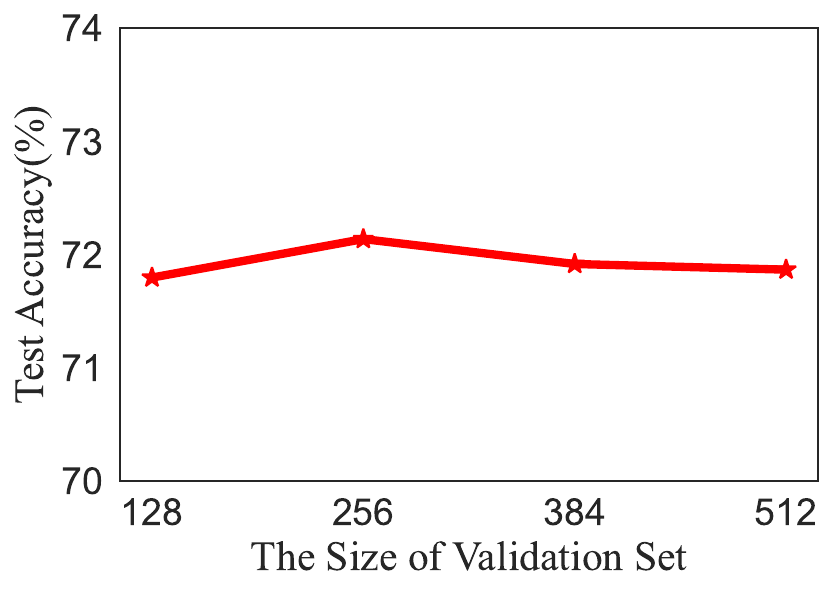}
		\label{fig:2}
	}
	\hfil
	\subfloat[Yahoo! News]
	{
		\includegraphics[width=0.23\textwidth]{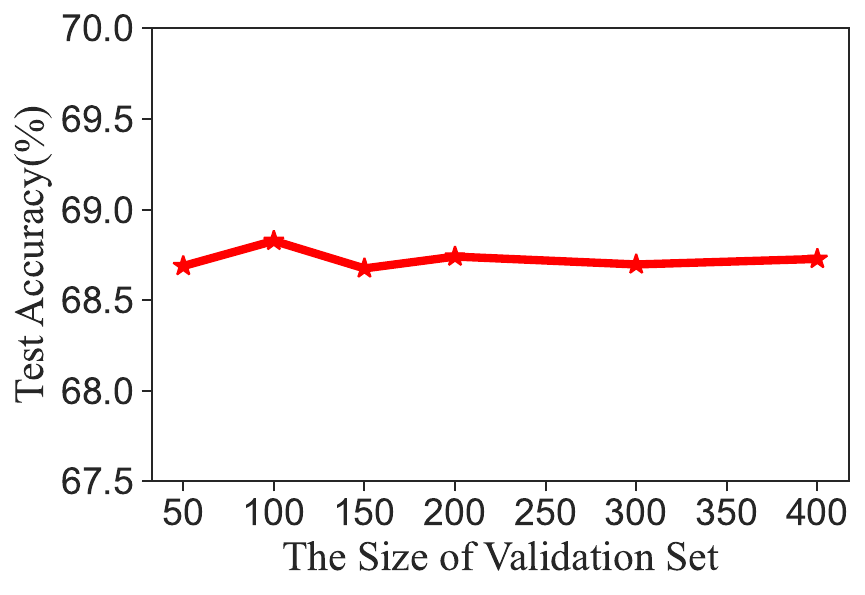}
	}
	\hfil
	\subfloat[Soccer Player]
	{
		\includegraphics[width=0.23\textwidth]{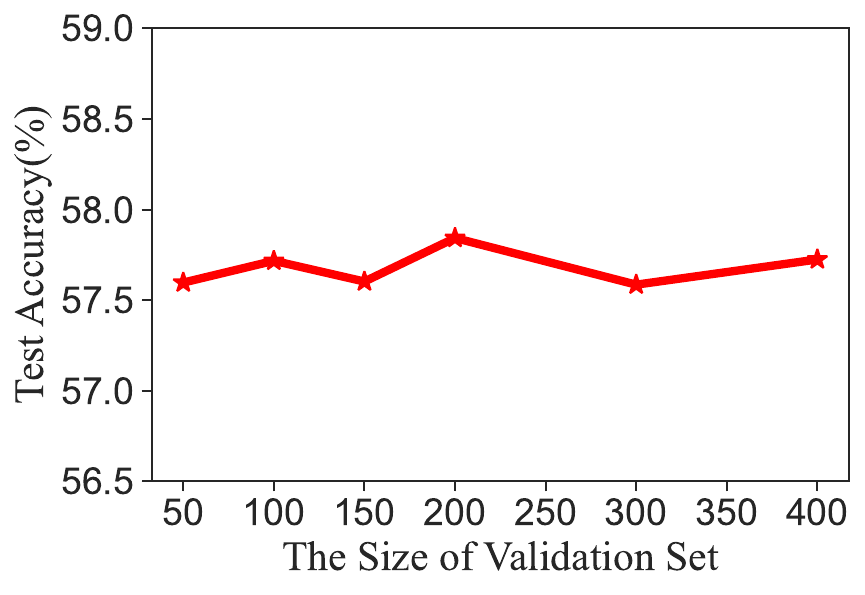}
	}

	\caption{Test accuracy curve of MoGD on CIFAR-10, CIFAR-100, Yahoo! News and Soccer Player with varying size of validation set. For CIFAR-10, the ambiguity level $q=0.1$, and for CIFAR-100, the ambiguity level $q=0.01$.}
	\label{fig:meta_size}
\end{figure*}
\section{Conclusion}
\label{sec:conclusion}
In this paper, we propose a novel framework for partial label learning by achieving disambiguation for candidate labels in a meta-learning manner. Different from the previous methods, we utilize the meta-objective on a tiny clean validation set to adaptively estimate the confidence of each candidate label without extra assumptions of data. Then we optimize the multi-class classifier by minimizing a confidence-weighted objective function. To improve the training efficiency, we iteratively update these two objective functions by using an online approximation strategy. We validate the effectiveness of MoGD both theoretically and experimentally. In theory, we prove that the model trained by MoGD is never worse than merely learning from partial-labeled data. Extensive experiments on commonly used benchmark datasets and real-world datasets demonstrate that MoGD is superior to the state-of-the-art methods. In the future, we will focus on studying other more powerful learning models to further enhance the performance of MoGD algorithm.
\\

\bibliographystyle{IEEEtran}
\bibliography{reference}

\vfill

\end{document}